%%%%%%%%%%%%%%%%%%%%%%%%%%%%%%%%%%%%%%%%%%%%%%%%%%%%%%%%%%%%%%%%%%%%%%%%%%%%
%% Author template for Management Science (mnsc) for articles with no e-companion (EC)
%% Mirko Janc, Ph.D., INFORMS, mirko.janc@informs.org
%% ver. 0.95, December 2010
%%%%%%%%%%%%%%%%%%%%%%%%%%%%%%%%%%%%%%%%%%%%%%%%%%%%%%%%%%%%%%%%%%%%%%%%%%%%
\documentclass[opre,nonblindrev]{informs3}

\OneAndAHalfSpacedXI
%%\OneAndAHalfSpacedXII % Current default line spacing
%%\DoubleSpacedXII
%%\DoubleSpacedXI

% If hyperref is used, dvi-to-ps driver of choice must be declared as
%   an additional option to the \documentclass. For example
%\documentclass[dvips,mnsc]{informs3}      % if dvips is used
%\documentclass[dvipsone,mnsc]{informs3}   % if dvipsone is used, etc.

% Private macros here (check that there is no clash with the style)

% Natbib setup for author-year style
\usepackage{natbib}
\usepackage{algorithm}
\usepackage{subfig}
\usepackage[noend]{algpseudocode}
\usepackage{longtable, multirow}
\usepackage{graphicx}

\makeatother

 \bibpunct[, ]{(}{)}{,}{a}{}{,}%

\newcommand{\E}{\mathbb{E}}
\newcommand{\prob}{\mathbb{P}}

\setcounter{secnumdepth}{4}
\setcounter{tocdepth}{4}
%% Setup of theorem styles. Outcomment only one.
%% Preferred default is the first option.
\TheoremsNumberedThrough     % Preferred (Theorem 1, Lemma 1, Theorem 2)
%\TheoremsNumberedByChapter  % (Theorem 1.1, Lema 1.1, Theorem 1.2)
\ECRepeatTheorems

%% Setup of the equation numbering system. Outcomment only one.
%% Preferred default is the first option.
\EquationsNumberedThrough    % Default: (1), (2), ...
%\EquationsNumberedBySection % (1.1), (1.2), ...

% For new submissions, leave this number blank.
% For revisions, input the manuscript number assigned by the on-line
% system along with a suffix ".Rx" where x is the revision number.

\MANUSCRIPTNO{OPRE-2018-05-302}

%%%%%%%%%%%%%%%%
\begin{document}
%%%%%%%%%%%%%%%%

% Outcomment only when entries are known. Otherwise leave as is and
%   default values will be used.
%\setcounter{page}{1}
%\VOLUME{00}%
%\NO{0}%
%\MONTH{Xxxxx}% (month or a similar seasonal id)
%\YEAR{0000}% e.g., 2005
%\FIRSTPAGE{000}%
%\LASTPAGE{000}%
%\SHORTYEAR{00}% shortened year (two-digit)
%\ISSUE{0000} %
%\LONGFIRSTPAGE{0001} %
%\DOI{10.1287/xxxx.0000.0000}%

% Author's names for the running heads
% Sample depending on the number of authors;
% \RUNAUTHOR{Jones}
% \RUNAUTHOR{Jones and Wilson}
\RUNAUTHOR{Li, Zhong, and Brandeau}
% \RUNAUTHOR{Jones et al.} % for four or more authors
% Enter authors following the given pattern:
%\RUNAUTHOR{}

% Title or shortened title suitable for running heads. Sample:
% \RUNTITLE{Bundling Information Goods of Decreasing Value}
% Enter the (shortened) title:
\RUNTITLE{Quantile Markov Decision Processes}

% Full title. Sample:
% \TITLE{Bundling Information Goods of Decreasing Value}
% Enter the full title:
\TITLE{Quantile Markov Decision Processes}

% Block of authors and their affiliations starts here:
% NOTE: Authors with same affiliation, if the order of authors allows,
%   should be entered in ONE field, separated by a comma.
%   \EMAIL field can be repeated if more than one author
\ARTICLEAUTHORS{%
\AUTHOR{Xiaocheng Li}
\AFF{Department of Management Science and Engineering, Stanford University, Stanford, CA, 94305, \EMAIL{chengli1@stanford.edu}} %, \URL{}}
\AUTHOR{Huaiyang Zhong}
\AFF{Department of Management Science and Engineering, Stanford University, Stanford, CA, 94305, \EMAIL{hzhong34@stanford.ed}}
\AUTHOR{Margaret L. Brandeau}
\AFF{Department of Management Science and Engineering, Stanford University, Stanford, CA, 94305, \EMAIL{brandeau@stanford.edu}}
% Enter all authors
} % end of the block

\ABSTRACT{%
\textbf{Abstract.} The goal of a traditional Markov decision process (MDP) is to maximize expected cumulative reward over a defined horizon (possibly infinite). In many applications, however, a decision maker may be interested in optimizing a specific quantile of the cumulative reward instead of its expectation. In this paper we consider the problem of optimizing the quantiles of the cumulative rewards of a Markov decision process (MDP), which we refer to as a quantile Markov decision process (QMDP). We provide analytical results characterizing the optimal QMDP value function and present a dynamic programming-based algorithm to solve for the optimal policy. The algorithm also extends to the MDP problem with a conditional value-at-risk (CVaR) objective. We illustrate the practical relevance of our model by evaluating it on an HIV treatment initiation problem, where patients aim to balance the potential benefits and risks of the treatment.
% Enter your abstract
}%

% Sample
%\KEYWORDS{deterministic inventory theory; infinite linear programming duality;
%  existence of optimal policies; semi-Markov decision process; cyclic schedule}

% Fill in data. If unknown, outcomment the field
\KEYWORDS{Markov Decision Process, Dynamic Programming, Quantile, Risk Measure, Medical Decision Making} \HISTORY{}

\maketitle
%%%%%%%%%%%%%%%%%%%%%%%%%%%%%%%%%%%%%%%%%%%%%%%%%%%%%%%%%%%%%%%%%%%%%%

% Samples of sectioning (and labeling) in MNSC
% NOTE: (1) \section and \subsection do NOT end with a period
%       (2) \subsubsection and lower need end punctuation
%       (3) capitalization is as shown (title style).
%
%\section{Introduction.}\label{intro} %%1.
%\subsection{Duality and the Classical EOQ Problem.}\label{class-EOQ} %% 1.1.
%\subsection{Outline.}\label{outline1} %% 1.2.
%\subsubsection{Cyclic Schedules for the General Deterministic SMDP.}
%  \label{cyclic-schedules} %% 1.2.1
%\section{Problem Description.}\label{problemdescription} %% 2.

% Text of your paper here

\section{Introduction}
The problem of sequential decision making has been widely studied in the fields of operations research, management science, artificial intelligence, and stochastic control. Markov decision processes (MDPs) are one important framework for addressing such problems. In the traditional MDP setting, an agent sequentially performs actions based on information about the current state and then obtains rewards based on the action and state. The goal of an MDP is to maximize the expected cumulative reward over a defined horizon which may be finite or infinite. 

In many applications, however, a decision maker may be interested in optimizing a specific quantile of the cumulative reward instead of its expectation. For example, a physician may want to determine the optimal drug regime for a risk-averse patient with the objective of maximizing the $0.10$ quantile of the cumulative reward; this is the cumulative improvement in health that is expected to occur with at least 90\% probability for the patient \citep{beyerlein2014quantile}. A company such as Amazon that provides cloud computing services might want their cloud service to be optimized at the $0.01$ quantile  \citep{DeCandia:2007:DAH:1323293.1294281}, meaning that the company strives to provide service that satisfies $99\%$ of its customers. In the finance industry, the quantile measure, sometimes referred as value at risk (VaR), has been used as a measure of capital adequacy \citep{Duffie97}. For example, the 1996 Market Risk Amendment to the Basel Accord officially used VaR for determining the market risk capital requirement \citep{Berkowitz2002}.  The advantage of a quantile objective lies in its focus on the distribution of rewards. The cumulative reward usually cannot be characterized by the expectation alone; distributions of cumulative reward with the same expectation may differ greatly in their lower or upper quantiles, especially when they are skewed. 

In this paper, we study the problem of optimizing quantiles of cumulative rewards of a Markov decision process, which we refer to as a quantile Markov decision process (QMDP). Our QMDP formulation considers a quantile objective for an underlying MDP with finite states and actions, and with either a finite or infinite horizon. We show that the key to solving the optimal policy for a QMDP problem is proper augmentation of the state space. The augmented state acts as a performance measure of the past cumulative reward and thus ``Markovizes" the optimal decisions. This enables us to develop an efficient dynamic programming procedure that finds the optimal QMDP value function for all states and quantiles in one pass. In the execution of the optimal policy,  the augmented state guides the strategy in subsequent periods to be aggressive, neutral, or conservative. We also demonstrate how the same idea extends to the conditional value at risk (CVaR) objective. 

\subsection{Main Contributions}

In this section we describe the contribution of our work in three areas: model formulation (as a risk-sensitive MDP), solution methodology (the design of a dynamic programming algorithm to handle a non-Markovian objective), and practical application.

\textbf{Risk-sensitive MDP.} 
There are two types of uncertainty associated with an MDP: \textit{inherent uncertainty}, which is the variability of cumulative cost/reward caused by the stochasticity of the MDP itself, and \textit{model uncertainty}, which is the uncertainty caused by unavoidable model ambiguity or parameter estimation errors \citep{delage2010percentile, doi:10.1287/moor.1120.0566}. The QMDP model aims to explicitly characterize the inherent uncertainty of an MDP by looking at the quantiles and the distribution of the cumulative reward.

The study of risk-sensitive MDPs dates back to \cite{howard1972risk} who proposed the use of an exponential utility function to capture risk attitude. The authors developed a policy iteration algorithm that relies on the structure of the exponential function to solve for the optimal policy. Subsequently, \cite{piunovskiy2006dynamic, tamar2012policy}, and \cite{mannor2011mean} imposed a variance constraint, and \cite{altman1999constrained} and \cite{ermon2012probabilistic} added a probabilistic constraint on the MDP cumulative reward. The variance or probability constraint characterizes the variation of the cumulative reward and is introduced to control the internal risk of MDP. \cite{di2012policy} derived policy gradient algorithms for variance-related risk criteria and
\cite{arlotto2014markov} identified types of MDP problems where the mean of the cumulative reward dominates its variance. \cite{chow2017} noted that the optimal policy for such models is usually very sensitive to the choice of the risk parameter value and to misspecification of the underlying probability distribution. 

Another stream of research on risk-sensitive MDPs employs risk measures that account for the variation of the cumulative reward. \cite{ruszczynski2010risk} and \cite{jiang2017risk} proposed nested risk measures for a risk-averse MDP problem. The nested objective inductively summarizes the cost-to-go reward at each time step into a deterministic value; thus the problem can be solved by a dynamic programming procedure similar to that for a traditional MDP. One shortcoming of the nested risk measure is that there is no clear relation between the cumulative reward and the optimal nested objective function value. Moreover, the nested risk measure involves a user-specified parameter. We will further elaborate on the difference between QMDP and nested risk measure models through an example in Section \ref{illustration}.

The risk-sensitive MDP models described in this section solve the optimal policy for only one risk parameter at a time. Consequently, they require prior knowledge to specify the risk parameter; if, after obtaining the optimal policy for a given parameter value, the reward is not satisfactory, one has to solve the model again with another parameter value, essentially using a trial-and-error procedure. In contrast, QMDP outputs the optimally achievable quantile of the cumulative reward for all quantiles (the risk parameter in the QMDP model) in a single run of dynamic programming.   

\vspace{0.2cm}

\noindent \textbf{Dynamic programming for a non-Markovian objective.} The main difficulty in solving risk-sensitive MDP models is the design of an efficient dynamic programming algorithm. 
Nested risk measure models \citep{ruszczynski2010risk, jiang2017risk} compose a sequence of one-step risk measures. Since the optimal action in each period depends only on the current state, these models avoid the inconvenience of dealing with non-Markovian structures. Certain model structures can be utilized for algorithm design, such as the dual-based dynamic programming approach for the multi-stage stochastic programming problem \citep{Shapiro2013}.  For MDP with the CVaR objective, a number of studies \citep{Bauerle2011, Yu2017, chow2014algorithms, chow2015risk} have each solved the problem under slightly different settings. A common technique employed in these papers is augmentation of the state space and execution of a dynamic programming algorithm in the augmented state space; however, the augmentation methods are restricted to the CVaR objective and cannot be generalized to handle the quantile objective. 

QMDP deals with a non-Markovian objective where the optimal policy may depend on the entire past history. Our solution algorithm provides a state-augmentation method to handle the non-Markovian objective and complements the literature on dynamic programming algorithms. The augmented state for the quantile objective acts like a ``sufficient statistic'' for the past history. The dynamic programming algorithm is executed over the augmented state space with an optimization subroutine. Compared to the augmented state in the CVaR MDP, our augmented variable conveys a tangible meaning -- quantile -- whereas the augmented state in \cite{Bauerle2011} and \cite{Yu2017} is only a nominal variable to facilitate the solving of the optimization problem. The QMDP cost-to-go function at each time period is a function of the current state and the augmented state (quantile), and represents the optimal value function of a QMDP subproblem for the remaining periods (given the current state and for all quantiles), in contrast to the nested risk measure formulation \citep{ruszczynski2010risk, jiang2017risk} where the cost-to-go function is simply a deterministic value. This special property of the augmented state enables us to solve QMDP for the optimal value function and the optimal policy for all quantiles in one pass of dynamic programming. The other formulations can only solve one risk parameter at a time, and the CVaR MDP algorithms proposed by \cite{Bauerle2011} and \cite{Yu2017} could possibly require solving dynamic programming procedures infinitely many times to obtain the optimal value function and policy for a single percentile parameter.  

The dynamic programming results from QMDP also provide insights for understanding a dynamic quantile risk measure and give a non-constructive explanation for the time-inconsistency \citep{cheridito2009time} of the quantile risk measure. A quantile objective specifies a family of risk measures. The execution of the QMDP solution procedure entails a dynamic change of the risk measure within the family. This makes the conventional definition of dynamic risk measure unsuitable for the quantile objective. In Section \ref{riskmeasure}, we discuss this issue in detail and develop a new notion of time-consistent risk measure. 

\vspace{0.2cm}

\noindent \textbf{Practical Relevance of QMDP.}
MDP models have been widely applied to many real-world problems  including, for example, financial derivative pricing \citep{Tsitsiklis1999},  service system planning \citep{sennott_1989}, and chronic disease treatment \citep{shechter2008optimal, mason2014optimizing}. However, these applications do not consider the fact that many decisions are inherently risk-sensitive. For instance, both physicians and patients are concerned about the risk associated with different medical treatment decisions. Practitioners have applied the quantile objective in a variety of applications, but in a descriptive manner \citep{Berkowitz2002,austin2005use, beyerlein2014quantile}. Our work contributes to the adoption of quantile criteria in sequential decision making. In prior work there has been no clear solution to decode the full distribution of cumulative reward. With a single pass of the QMDP solution algorithm, we can obtain the optimal rewards for all quantiles. Comparing these rewards to the quantiles of the cumulative reward under the traditional optimal MDP policy can help assess the need for adoption of a risk-sensitive framework. In this sense, QMDP is not a substitute for but a complement to MDP models in real-world applications. 

\subsection{Other Related Literature}

To the best of our knowledge, this paper is the first to address the MDP problem with a quantile objective in a generic setting. Several studies have examined restricted versions of the problem. \cite{filar1995percentile} studied the quantile objective for the limiting average reward of an infinite-horizon MDP, determining whether there exists a policy that achieves a specified value of the long-run limiting average reward at a specified probability level. \cite{ummels2013computing} developed an algorithm to compute the quantile cumulative rewards for a given policy in polynomial time. The algorithm is descriptive rather than prescriptive in terms of understanding the uncertainty associated with the Markov chain (MDP with a fixed policy). \cite{Gilbert2016} addressed the quantile MDP problem for the special case of deterministic rewards and preference-based MDP.   

CVaR, also known as average value at risk (AVaR) or expected shortfall, has been explored in the context of risk-sensitive MDPs. CVaR is defined as the expectation of the cost/reward in the worst $q$\% of cases. From the perspective of chance constrained optimization, the CVaR criterion can be viewed as a convex relaxation of the quantile criterion and thus can be optimized more conveniently \citep{nemirovski2006convex}. \cite{Bauerle2011} utilized a variational representation of the CVaR criteria and derived an analytical framework for solving MDP with a CVaR objective. The variational form expresses the optimal value of CVaR MDP as an optimization of a univariate function on the real line. The function value at each real number must be computed by executing a dynamic programming procedure in the same way as for a traditional MDP. \cite{Yu2017} studied MDP under a more general class of risk measures that have similar variational form. The algorithms in \cite{Bauerle2011} and \cite{Yu2017} optimize a function via grid search and employ a dynamic programming subroutine for the underlying MDP and thus offer no complexity guarantee and only solve for a single percentile each time. In contrast, QMDP solves for all the quantiles in a single pass of dynamic programming. 

Recent work has explored the interaction of the CVaR objective with MDP. \cite{Carpin} studied the CVaR objective for the total cost/reward of transient MDPs. \cite{chow2014algorithms} considered MDP with an expectation objective and a CVaR constraint. \cite{chow2015risk} considered the CVaR objective for the cumulative reward, which is close to the quantile objective in this paper, but only considered infinite-horizon discounted MDPs. In this paper, we show that the derivation of our QMDP model naturally extends to CVaR MDP, and we provide an exact algorithm for solving the CVaR MDP problem (including for the case of a finite horizon and an undiscounted setting not considered by \cite{chow2015risk}). 

Finally, in the area of reinforcement learning \cite{bellemare2017distributional} proposed a distributional perspective and derived a method that outputs the distribution, rather than just the expectation, of the optimal cumulative reward. The optimal policy was defined as maximizing the expectation of the cumulative reward. Their method provides additional insights regarding the distribution of the optimal reward. Subsequent studies \citep{dabney2018distributional, yang2019fully} examined different ways to parameterize and learn the distributional value function. These studies, though still adopting expectation as the optimality criterion, shed light on the importance of the distributional information in a sequential decision making context.

\subsection{Illustration of the QMDP Output}

The method presented in this paper computes the QMDP optimal value function and optimal policy for all quantiles with a single pass of dynamic programming. Figure \ref{fig:opt_val_illu} shows the QMDP optimal value function for three different underlying MDPs that share the same state and action space but have different reward functions and transition probabilities (this example is worked out in Section 6.1). Each point on the red solid curve indicates the optimally achievable quantile level for the cumulative reward. The gray dashed curve shows the empirical cumulative density function (CDF)  of the cumulative reward under the traditional MDP optimal policy that maximizes the expected reward. 

%We will defer the detailed setup of the MDP model and the computational aspect to Section 6.

\begin{figure}[t]
\centering
\includegraphics[height=2.5 in]{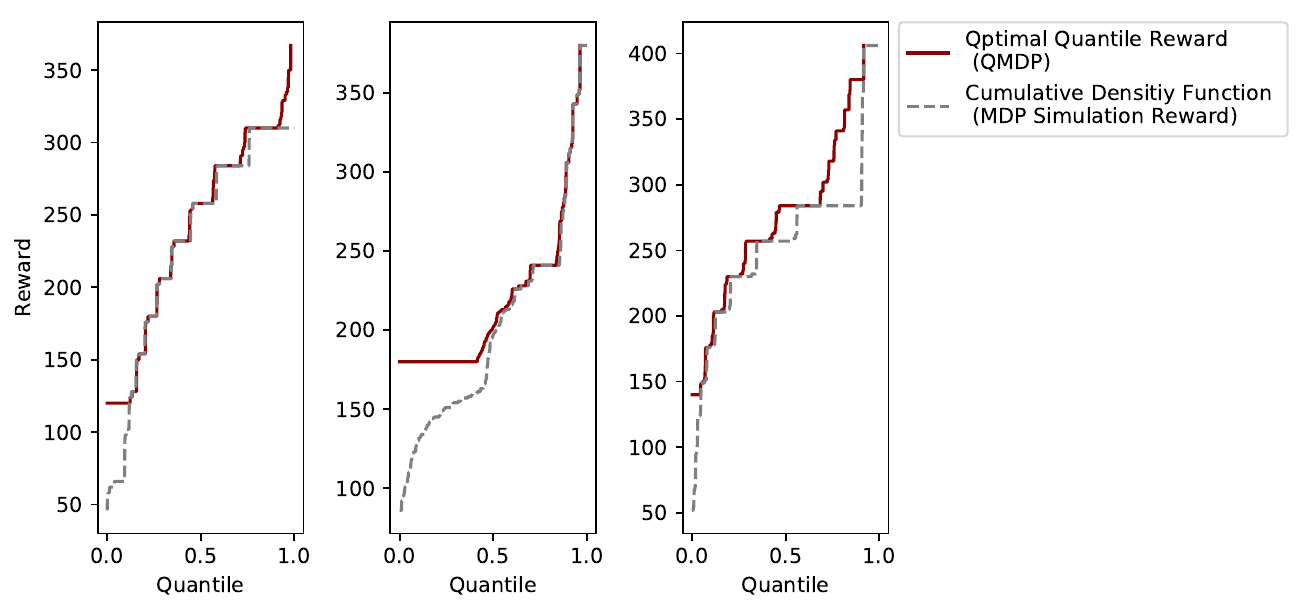}
\caption{\small{Comparison of MDP and QMDP value functions. Each plot is obtained from a different initialization of the model parameters. The red lines are the optimal quantile rewards computed via QMDP. The gray dashed lines are the cumulative density function for simulations with the execution of the optimal MDP policy.}}\centering
\label{fig:opt_val_illu}
\end{figure}

The QMDP optimal value function captures the distributional information of the cumulative reward. For nested risk measure or utility function formulations the value function could be a complicated nonlinear transformation of the cumulative reward; if we want to know the distribution of the cumulative reward, simulation of each policy is necessary (in the same way that the empirical CDF is obtained for the traditional MDP). Additionally, risk-sensitive MDP models usually involve a trade-off procedure between risk and reward, which entails solving models for multiple parameters. For QMDP the optimal value function can be computed for all parameters at once. 

The QMDP model output provides a risk assessment for the underlying MDP problem. The three MDP problems in Figure \ref{fig:opt_val_illu} have different patterns of inherent risk. For the example in the middle panel, if a quantile reward is desired, there is an opportunity for significant improvement for quantiles below the median. In this case, a risk-sensitive MDP model might be desirable for a risk-averse decision maker. For the example in the left panel, the only significant difference occurs at the lowest quantiles. In this case the optimal policy under the traditional MDP, although not necessarily achieving quantile optimality, is quite stable and robust. In this way, QMDP can be used to determine whether a risk-sensitive MDP model is desirable and what kind of improvement one would expect if adopting a risk-sensitive MDP. 

\smallskip

The remainder of this paper is organized as follows. We lay out the traditional MDP problem formulation and assumptions in Section 2 and  present the QMDP problem formulation and dynamic programming solution in Section 3. We describe the algorithm for solving QMDP as well as its computational aspects in Section 4. We discuss extensions of the model in Section 5. We present empirical results on a synthetic example as well as on an HIV treatment initiation problem in Section 6. We conclude with discussion in Section 7.

\section{Markov Decision Process}

A Markov decision process (MDP) consists of two parts \citep{bertsekas1995dynamic}: an underlying discrete-time dynamic system, and a reward function that is additive over time. A dynamic system defines the evolution of the state over time:
\begin{equation}
S_{t+1} = f_t(S_t, a_t, w_t), \ \ t=0,1,...,T-1,
\label{dynamics}
\end{equation}
where $S_t$ denotes the state variable at time $t$ from state space $\mathcal{S}$, $a_t$ denotes the actions/decisions at time $t$ and $w_t$ is a random variable that captures the stochasticity in the system.  The reward function at time $t$, denoted by $r_t(S_t, a_t, w_t)$, accumulates over time. The cumulative reward is  
$$r_T(S_T) + \sum_{t=0}^{T-1}r_t(S_t, a_t, w_t),$$
where $r_{T}(S_T)$ is the terminal reward at the end of the process. The random variable $w_t \in \mathcal{W}$ determines the transition in the state space and the state $S_{t+1}$ follows a distribution $P_t(\cdot|S_t,a_t)$ that is possibly dependent on the state $S_t$ and the action $a_t.$ We consider the class of policies that consist of a sequence of functions $\pi = \{\mu_0,...,\mu_{T-1}\}$ where $\mu_t$ maps historical information $\mathcal{H}_{t}=(S_0,a_0,...,S_{t-1},a_{t-1},S_t)$ to an admissible action $a_t\in \mathcal{A}_t \subset \mathcal{A}.$ Here we use $\mathcal{A}_t$ and $\mathcal{A}$ to denote the admissible action set. The policy $\pi$ together with the function $f_t$ determines the dynamics of the process. Given an initial state $S_0$ and a policy $\pi$, we have the following expected total reward:
$$\E^{\pi} \left[r_T\left(S_T\right) + \sum_{t=0}^{T-1}r_t\left(S_t, a_t, w_t\right)\right].$$

The objective of an MDP is to choose an optimal policy in the set $\Pi$ of all admissible policies that maximizes the expected total reward, i.e.
\begin{equation}
\max_{\pi \in \Pi} \E^{\pi} \left[r_T\left(S_T\right) + \sum_{t=0}^{T-1}r_t\left(S_t, a_t, w_t\right)\right],
\label{expectMDP}
\end{equation}
where the expectation is taken with respect to $(w_0, w_1,...,w_{T-1}).$ Without loss of generality, we assume $r_T\left(S_T\right)=0$ for all $S_T.$

\subsection{Assumptions}

We first discuss a few assumptions and clarify the scope of this paper. 

\begin{assumption}[State and Action Space]
\begin{itemize}
    \item[]
    \item [(a)] The state space $\mathcal{S}$ and the action space $\mathcal{A}$ are finite.
    \item[(b)] The random variable $w_t \in \mathcal{W}$ has a finite support, i.e. $|\mathcal{W}| < \infty.$
    \item[(c)] The function $f_t$ is ``weakly invertibile'': $\mathcal{S}\times \mathcal{A} \times \mathcal{W} \rightarrow \mathcal{S}$ governs the dynamic system (\ref{dynamics}). Specifically, there exists a function $l_t: \mathcal{S}\times \mathcal{A} \times \mathcal{S} \rightarrow \mathcal{W}$ such that for any $s\in \mathcal{S},$ $a \in \mathcal{A}$ and $w\in \mathcal{W},$
$$l_t(s,a,f_t(s,a,w)) = w.$$
\end{itemize}
\label{assume}
\end{assumption}

Part (a) is a classic assumption about the finiteness of the state and action spaces. In part (b), we assume that the random variable $w_t$ has a finite support, i.e. $w_t$ is a discrete random variable only taking finite possible values. This paper concerns the quantiles of cumulative reward; if $w_t$ has infinite support, it will result in the reward $r_t\left(S_t, a_t, w_t\right)$ and the cumulative reward having infinite support. In fact, there is no general way to store the infinite support random variable or to query its quantiles unless the distribution has some parameterized structure. Since we aim for a generic treatment of the quantile MDP problem, the assumption of finite support is necessary. Also, because a random variable can be always approximated by a finitely supported discrete random variable at any granularity, we believe part (b) will not cause much practical limitation.

Part (c) is introduced for notational simplicity in our derivation. We show how to remove this assumption in Appendix A. Part (c) states that the random variable $w_t$ can be fully recovered with the knowledge of $S_t, a_t$, and $S_{t+1},$ i.e. there exists a function $l_t$ s.t. $w_t = l_t\left(S_t, a_t, S_{t+1}\right).$ This assumption means that there is no additional randomness other than that which governs the state transitions. It follows that the reward $r_t$ will be a function of $S_{t},$ $a_t$, and $S_{t+1}.$ In practice, this assumption is well satisfied by most MDP applications. Additionally, we allow the dynamics $f_t(\cdot)$ in part (c) and the reward function $r_t(\cdot)$ to be non-stationary and non-parametric.

\section{Quantile Markov Decision Process}

In this section, we formulate the QMDP problem and derive our main result -- a dynamic programming procedure to solve QMDP. All proofs are provided in Appendix B.
%In section \ref{formulateQMDP}, we present the formulation and in section \ref{solveQMDP} show that the core of the dynamic program is an optimization problem. In section \ref{optimality}, we provide theoretical guarantees for the value function. 

\subsection{Quantile Objective and Assumptions}
\label{formulateQMDP}

The quantile of a random variable is defined as follows.
\begin{definition}
\label{qt}
For $\tau \in (0,1),$ the $\tau$-quantile of a random variable $X$ is defined as 
$$Q_\tau(X) \triangleq \inf\{x\mid \prob(X\le x) \ge \tau \}.$$
For $\tau=0,1$  we define $Q_0(X) = \inf\{X\}$ and $Q_1(X) = \sup\{X\}$, respectively.\footnote{Here we do not consider the effect of $0$-measure set. More precisely, the definition should be $Q_0(X) = \sup\{D\in \mathbb{R}|P(X\ge D)=1\}$ and $ Q_1(X) = \inf\{U\in \mathbb{R}|P(X\le U)=1\}.$
}
\end{definition}

The following properties are implied by the definition.
\begin{lemma}
For a given random variable $X$, $Q_\tau(X)$ is a left continuous and non-decreasing function of $\tau.$ Additionally, 
$$\prob\left(X\le  Q_\tau(X)\right) \ge \tau.$$
 \label{quantInq}
\end{lemma}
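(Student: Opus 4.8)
The plan is to work entirely through the cumulative distribution function $F(x) \triangleq \prob(X \le x)$, which is non-decreasing and right-continuous, and to rewrite the quantile as $Q_\tau(X) = \inf A_\tau$ where $A_\tau \triangleq \{x : F(x) \ge \tau\}$. The one structural fact I would record first is that $A_\tau$ is always an interval unbounded above: because $F$ is non-decreasing, $x \in A_\tau$ together with $y \ge x$ forces $F(y) \ge F(x) \ge \tau$, so $y \in A_\tau$. This observation drives all three conclusions. Monotonicity is then immediate: if $\tau_1 \le \tau_2$ then $A_{\tau_2} \subseteq A_{\tau_1}$, and since the infimum of a smaller set is no smaller, $Q_{\tau_1}(X) = \inf A_{\tau_1} \le \inf A_{\tau_2} = Q_{\tau_2}(X)$.

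For the attainment inequality $\prob(X \le Q_\tau(X)) \ge \tau$, I would set $q = Q_\tau(X) = \inf A_\tau$ and pick a sequence $x_n \in A_\tau$ with $x_n \downarrow q$. Each satisfies $F(x_n) \ge \tau$, so right-continuity of $F$ gives $F(q) = \lim_n F(x_n) \ge \tau$; equivalently $q \in A_\tau$, which is exactly $\prob(X \le Q_\tau(X)) \ge \tau$.

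The substantive part is left-continuity. By the monotonicity just established, the left limit $Q_{\tau^-}(X) = \sup_{\tau' < \tau} Q_{\tau'}(X)$ exists and satisfies $Q_{\tau^-}(X) \le Q_\tau(X)$, so it suffices to exclude strict inequality. I would argue by contradiction: assume $Q_{\tau^-}(X) < Q_\tau(X)$ and choose $y$ strictly between them. For every $\tau' < \tau$ we then have $y > Q_{\tau'}(X) = \inf A_{\tau'}$, so by the definition of the infimum some point of $A_{\tau'}$ lies at or below $y$, whence $F(y) \ge \tau'$. Letting $\tau' \uparrow \tau$ yields $F(y) \ge \tau$, i.e. $y \in A_\tau$ and therefore $Q_\tau(X) \le y$, contradicting $y < Q_\tau(X)$. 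Hence $Q_{\tau^-}(X) = Q_\tau(X)$, which is left-continuity.

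I expect this last contradiction to be the only delicate step, and the point most prone to error is the asymmetry it exploits: $F$ is right-continuous, yet the induced quantile function is left-continuous, the reversal coming from the infimum in the definition. The boundary values $\tau = 0, 1$ are governed by their own definitions and are handled separately, so they do not interfere with the interior argument above.
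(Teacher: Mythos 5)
Your proof is correct: the paper gives no argument for this lemma (it is simply stated as ``implied by the definition''), and your derivation---recasting $Q_\tau(X)=\inf A_\tau$ with $A_\tau=\{x: \prob(X\le x)\ge\tau\}$, using upward-closure of $A_\tau$ for monotonicity, right-continuity of the CDF for the attainment inequality, and the contradiction argument for left-continuity---is the standard and complete way to establish exactly what the paper asserts. All three steps check out, including the delicate left-continuity step, so nothing is missing.
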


The goal of the QMDP is to maximize the $\tau$-quantile of the total reward:
\begin{equation}
\max_{\pi \in \Pi} Q^{\pi}_{\tau} \left[\sum_{t=0}^{T-1}r_t\left(S_t, a_t, w_t\right)\right].
\label{quantileMDP}
\end{equation}
Here the quantile is taken with respect to the random variables $(w_0, w_1, ..., w_{T-1})$, and the superscript $\pi$ denotes the policy we choose. The above formulation is for the case of a fixed finite horizon, i.e. $T<\infty.$ For the infinite-horizon case, the objective is
\begin{equation}
\max_{\pi \in \Pi} Q^{\pi}_{\tau} \left[\sum_{t=0}^{\infty} \gamma^t r_t\left(S_t, a_t, w_t\right)\right],
\end{equation}
where $\gamma\in(0,1)$ is the discount factor.

As in the derivation of MDP with expectation objective, we introduce a value function for the quantile reward of the Markov decision process. Suppose that the process initiates in state $s$ at time $t$, and we adopt the policy $\pi_{t:T}$. The value function is
$$v_t^{\pi_{t:T}}(s, \tau)   \triangleq  Q_\tau\left[\sum_{k=t}^{T-1}r_k\left(S_k, a_k, w_k\right) \Big|S_t =s \right].$$
Here $\pi_{t:T} = (\mu_{t},...,\mu_{T-1})$ denotes the policy and the action $$a_k = \mu_k(\mathcal{H}'_k) = \mu_{k}(S_t,a_t,...,S_k)$$ for $k=t,...,T-1.$ Since the process initiates at time $t$, the history $\mathcal{H}'_k$ also begins with $S_t$. We emphasize that the value function is a function of both the state $s$ and the quantile of interest $\tau$ and is indexed by time $t$.
The value function also depends on the chosen policy $\pi_{t:T}$.

The objective of QMDP is to maximize the value $v_t^{\pi_{t:T}}(s, \tau)$ by optimizing the policy $\pi_{t:T}$. Thus, we define the optimal value function as
\begin{equation}
v_t(s, \tau)  \triangleq  \max_{\pi_{t:T} \in \Pi} v_t^{\pi_{t:T}}(s, \tau).
\label{val}
\end{equation}
When $t=0,$ the value function $v_0(s, \tau)$ is equal to the optimal value in (\ref{quantileMDP}). 

\subsection{Value Function and Dynamic Programming}
\label{solveQMDP}

We construct a dynamic programming procedure and derive the optimal value function $v_t(s, \tau)$ backward from $t=T-1$ to $t=0$. The key step is to relate the value functions $v_t(s, \tau)$ with $v_{t+1}(s, \tau).$ Intuitively, $v_{t+1}(s, \tau)$ is obtained by optimizing $\pi_{(t+1):T}$ while $v_{t}(s, \tau)$ is obtained by optimizing $\pi_{t:T}.$ The difference lies in the choice of $\pi_t=\mu_t(\cdot).$ To connect them, we introduce an intermediate value function by fixing the output action of $\mu_t(s)$ to be $a$:
$$\widetilde{v}_t(s, \tau, a) \triangleq  \max_{\{\pi_{t:T} \in \Pi | \mu_t(s)=a\}} v_t^{\pi_{t:T}}(s,\tau).$$
Note that
\begin{equation}
v_t(s,\tau) = \max_{a} \widetilde{v}_t(s,\tau,a).
\label{v_s_a}
\end{equation}
We now establish the relationship between $\widetilde{v}_t(s,\tau,a)$ and the value function $v_{t+1}(s', \tau')$. We have
  \begin{align}
\widetilde{v}_t(s, \tau, a) =&  \max_{\{\pi_{t:T} \in \Pi | \mu_t(s)=a\}} v_t^{\pi_{t:T}}(s,\tau)\nonumber  \\ 
= & \max_{\{\pi_{t:T} \in \Pi | \mu_t(s)=a\}}  Q_\tau\left(\sum_{s' \in \mathcal{S}}  1\{S_{t+1}=s' | S_t=s, a_t=a\} \left[\sum_{k=t}^{T-1}r_k\left(S_k, a_k, w_k\right) \Big|S_t =s, S_{t+1} = s' \right]\right).
\label{actionValFunc}
\end{align}
Here the second line is obtained by differentiating possible values for the state $S_{t+1}.$ It is a summation of $|\mathcal{S}|$ random variables, each of which is associated with a specific state $s'.$ Analyzing each term more carefully, we have,
$$1\{S_{t+1}=s' | S_t=s, a_t=a\} \left[\sum_{k=t}^{T-1}r_k(S_k, a_k, w_k) \Big|S_t =s, S_{t+1} = s' \right]$$
$$=  1\{S_{t+1}=s' | S_t=s, a_t=a\} r_t(S_t, a_t, w_t) + 1\{S_{t+1}=s' | S_t=s, a_t=a\} \left[\sum_{k=t+1}^{T-1}r_k(S_k, a_k, w_k) \Big|S_t =s, S_{t+1} = s' \right].$$
The first term here is deterministic with the knowledge of $S_t$ and $S_{t+1}$ under Assumption \ref{assume} (c). The second term seems to be closely related to the value function $v_{t+1}(s',\tau')$ in that the summation begins from $t+1$ and the conditional part includes the information of $S_{t+1}.$
%Two questions arise: First, what should the choice of $\tau'$ be here? Second, how we can relate the quantile of the summation to the quantile functions of each component in the summation? Notice that $v_t(s,\tau, a)$ is equal to a summation of $|\mathcal{S}|$ random variables. 
The following theorem formally establishes the relationship between $\widetilde{v}_t(s,\tau,a)$ and $v_{t+1}(s',\tau').$
\begin{theorem}[Value Function Dynamic Programming]
Let $\mathcal{S} = \{s_1,...,s_n\}$. Solving the value function defined in (\ref{val}) is equivalent to solving the following optimization problem:
  \begin{align}
 OPT\left(s,  \tau, a, v_{t+1}(\cdot,\cdot)\right) \ &  \triangleq\  \max_{\mathbf{q}} \min_{i \in \{ q_i \neq 1 \mid i= 1,2,...,n\}}  \ \   \left[v_{t+1}(s_i,q_i) + r_t\left(s, a, w_t\right)\right],
  \label{OPT} \\ 
  & = \ \max_{\mathbf{q}} \min_{i \in \{ q_i \neq 1 \mid i= 1,2,...,n\}}  \ \   \left[v_{t+1}(s_i,q_i) + r_t\left(s, a, l_t(s, a, s_i)\right)\right], \nonumber \\
  \text{subject to} & \ \  \sum_{i=1}^n p_i q_i \le \tau , q_i \in [0,1], \ p_i = \prob(S_{t+1}=s_i|S_t=s, a_t=a). \nonumber
 \end{align}
Here $w_t=l_t(s, a, s_{t+1})=l_t(s, a, s_i)$ is from Assumption \ref{assume} (c). We use $v_{t+1}(\cdot,\cdot)$ to denote the value function at $t+1$ and to emphasize that it is a function of state and  quantile. The decision variable here is the vector $\mathbf{q}$. Then,
$$ \widetilde{v}_t(s,\tau,a)  = OPT\left(s,  \tau, a, v_{t+1}(\cdot,\cdot)\right).$$
\label{valFunc}
\end{theorem}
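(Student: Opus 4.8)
The plan is to establish $\widetilde{v}_t(s,\tau,a) = OPT(s,\tau,a,v_{t+1}(\cdot,\cdot))$ by first reducing the statement to a fact about the quantile of a \emph{mixture} of distributions, and then proving that identity by two opposing inequalities. First I would condition on the next state. With $\mu_t(s)=a$ fixed, we have $S_{t+1}=s_i$ with probability $p_i=\prob(S_{t+1}=s_i\mid S_t=s,a_t=a)$, and by Assumption~\ref{assume}(c) the immediate reward $r_t(s,a,w_t)=r_t(s,a,l_t(s,a,s_i))$ is a deterministic constant on the event $\{S_{t+1}=s_i\}$. Writing $Y_i$ for the continuation reward $\sum_{k=t+1}^{T-1}r_k(\cdot)$ on that event, the key observation is that the history revealed at time $t+1$ already contains $S_{t+1}$, so the continuation policy may be chosen \emph{independently in each branch}. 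Hence the total reward from time $t$ is the mixture placing mass $p_i$ on $r_t(s,a,l_t(s,a,s_i))+Y_i$, and $\widetilde{v}_t(s,\tau,a)$ is the supremum, over per-branch continuation policies, of the $\tau$-quantile of this mixture.

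The crux is the claim that, for distributions with quantile functions $G_i$ (left continuous and non-decreasing by Lemma~\ref{quantInq}) and weights $p_i$, the $\tau$-quantile of the mixture equals $\max_{\mathbf q}\min_{i:q_i\ne 1}G_i(q_i)$ subject to $\sum_i p_iq_i\le\tau$ and $q_i\in[0,1]$. Here I read $q_i$ as the mass the mixture is allowed to place below a candidate threshold inside branch $i$; the budget $\sum_i p_iq_i\le\tau$ says the total mass below the threshold is at most $\tau$, and $\min_i G_i(q_i)$ is the largest threshold consistent with that allocation. A point worth stressing in advance is that no single continuation policy maximizes the entire quantile function $v_{t+1}(s_i,\cdot)$ at once, so the achievability argument must target only a single quantile level $q_i$ per branch rather than invoke the mixture identity with one policy per branch.

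For $\widetilde{v}_t\ge OPT$ (achievability), let $\mathbf q$ attain $OPT$ with value $m$, and in each branch choose a continuation policy whose $q_i$-quantile equals $v_{t+1}(s_i,q_i)$ — attainable since the per-branch finite-horizon, finite-state problem has a maximizing policy. By Definition~\ref{qt}, $\prob(Y_i< v_{t+1}(s_i,q_i))\le q_i$; for every active branch $m-r_t(s,a,l_t(s,a,s_i))\le v_{t+1}(s_i,q_i)$, and the bound $1$ holds trivially when $q_i=1$, so monotonicity of $y\mapsto\prob(Y_i<y)$ gives $\prob(Y_i<m-r_t(\cdot))\le q_i$ in all branches. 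Summing, $\prob(X<m)=\sum_i p_i\,\prob(Y_i<m-r_t(\cdot))\le\sum_i p_iq_i\le\tau$, hence $Q_\tau(X)\ge m$ and $\widetilde{v}_t(s,\tau,a)\ge OPT$.

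For $\widetilde{v}_t\le OPT$, take any admissible per-branch policy, set $v=Q_\tau(X)$, and let $G_i$ be its achieved branch quantile functions, which satisfy $G_i(\cdot)\le v_{t+1}(s_i,\cdot)+r_t(\cdot)$ by the very definition of $v_{t+1}$ as the maximal quantile. Applying the mixture-quantile identity to the $G_i$ and then using monotonicity of the outer $\max$ of the inner $\min$ in the data $G_i$ gives $v=Q_\tau(X)\le OPT$, and taking the supremum over policies yields $\widetilde{v}_t(s,\tau,a)\le OPT$. I expect the \textbf{main obstacle} to be proving the mixture-quantile identity rigorously in this discrete setting: since $w_t$ has finite support every $G_i$ is a step function with atoms, so the naive reading $q_i=\prob(Y_i<v-r_t(\cdot))$ can undershoot (landing on a flat step so that $Q_{q_i}(Y_i)<v-r_t(\cdot)$), and the endpoints $q_i\in\{0,1\}$ and $\tau\in\{0,1\}$ must be absorbed by the deliberate exclusion of $q_i=1$ from the inner minimization together with the infimum/left-continuity conventions of Definition~\ref{qt}. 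I would therefore derive the identity directly from $Q_\tau(X)=\inf\{x:F(x)\ge\tau\}$ with $F=\sum_i p_iF_i$, tracking strict versus weak inequalities at the jumps, rather than by manipulating quantile functions formally.
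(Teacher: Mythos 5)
Your route is essentially the paper's route: condition on $S_{t+1}$, use Assumption~\ref{assume}(c) to make the stage reward a per-branch constant, observe that the continuation policy can be chosen independently in each branch, and reduce the theorem to the quantile-of-a-mixture identity, which is exactly the paper's Lemma~\ref{XY}. Your achievability construction (per branch, pick a policy attaining $v_{t+1}(s_i,q_i)$ at the \emph{single} level $q_i$, since no one policy dominates the whole quantile function) is also the paper's construction; the paper merely packages it differently, introducing auxiliary variables $Z_i$ with $Q_{\tau'}(Z_i) = r_t + v_{t+1}(s_i,\tau')$ so that Lemma~\ref{XY} can be quoted, where you propose to prove the identity inline from the CDF.

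There is, however, a genuine gap, and it sits precisely at your step ``$\prob(X<m)\le\tau$, hence $Q_\tau(X)\ge m$.'' Under Definition~\ref{qt} (the lower quantile $\inf\{x:\prob(X\le x)\ge\tau\}$), this implication is false: take $X$ uniform on $\{0,1\}$, $m=1$, $\tau=1/2$; then $\prob(X<1)=1/2\le\tau$ but $Q_{1/2}(X)=0$. What you need is the \emph{strict} inequality $\prob(X<m)<\tau$, and your chain only yields $\prob(X<m)\le\sum_i p_iq_i\le\tau$, which is strict only when the budget constraint is slack. So, as written, the achievability half establishes $\widetilde{v}_t(s,\tau,a)\ \ge\ \sup\{\min_i(\cdots)\ :\ \sum_i p_iq_i<\tau\}$ rather than $\widetilde{v}_t(s,\tau,a)\ge OPT$. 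This is not a removable formality: when the optimal budget is tight and a branch is discarded via $q_i=1$, the gap is real. Concretely, let $a$ lead to $s_1,s_2$ each with probability $1/2$, $r_t\equiv 0$, and deterministic continuations $v_{t+1}(s_1,\cdot)\equiv 0$, $v_{t+1}(s_2,\cdot)\equiv 10$; with $\tau=1/2$, the choice $q_1=1,q_2=0$ is feasible and gives value $10$, yet every policy produces the two-point mixture whose lower $1/2$-quantile is $0$. The strict-versus-weak bookkeeping you flagged in your last paragraph is thus exactly where the content lies: it is where the paper's proof of Lemma~\ref{XY} spends its effort (one direction by contradiction through a chain claimed to be strictly below $\tau$; the other by proving $\sum_i p_i\bar q_i<\tau$ strictly via discreteness and then distributing slack $\epsilon=(\tau-\tau_0)/n$), and at knife-edge values of $\tau$ of the kind above even that argument does not close, because the weak-constraint identity itself fails there. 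To repair your proof you must either restrict to strictly feasible $\mathbf{q}$ and pass to the supremum (accepting the possible knife-edge discrepancy), or work with a quantile/constraint convention under which mass exactly at level $\tau$ is handled consistently on both sides of the identity.
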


The optimization problem stated in the theorem comes from the following lemma which computes the quantile of a sum of random variables (as it appears in the right-hand side of (\ref{actionValFunc})). Recall that the expectation of a summation of random variables equals the summation of the expectations, and this linearity makes possible the backward dynamic programming in the traditional MDP. Lemma \ref{XY} plays a similar role in that it relates the quantile of the summation of random variables to the quantiles of each random variable. This result together with the optimization algorithm in the next section is of potential interest for other applications concerned with the quantiles of random variables. 

\begin{lemma} 
\label{XY}
Consider $n$ discrete random variables $X_i$, $i=1,...,n,$ (here and hereafter, by discrete random variables, we mean that $X_i$ take values on a finite set) and another $n$ binary random variables $Y_i \in \{0, 1\}$ with $\sum_{i=1}^n Y_i=1.$ 
Then the quantile of the summation
$$ Q_\tau\left(\sum_{i=1}^n X_i Y_i \right)$$ is given by the solution to the following optimization problem:
\begin{align}
   \max_\mathbf{q} &\ \ \min_{i \in \{ q_i \neq 1 \mid i= 1,2,...,n\}}  \ \ Q_{q_i}(X_i) \label{QX_i} \\
\text{subject to} & \ \  \sum_{i=1}^n p_i q_i \le \tau,\nonumber \\
&q_i \in [0,1], \ p_i = \prob(Y_i=1). \nonumber 
\end{align}
Here $\mathbf{q}=(q_1,...,q_n)$ is the decision variable and $Q_{q_i}(X_i)$ is the $q_i$-quantile of the conditional distribution $X_i|Y_i=1.$
\end{lemma}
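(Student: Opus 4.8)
The plan is to pass from the sum to the cumulative distribution function (CDF) of the mixture it induces, and then to read the optimization as a generalized inverse of that CDF. First I would condition on the indicator variables: since exactly one $Y_i$ equals $1$, conditional on $Y_i=1$ the sum $Z:=\sum_{j=1}^n X_j Y_j$ is distributed as $X_i\mid Y_i=1$. Writing $p_i=\prob(Y_i=1)$ and $F_i(z)=\prob(X_i\le z\mid Y_i=1)$, the law of $Z$ is the mixture with CDF $F_Z(z)=\sum_{i=1}^n p_i F_i(z)$, so that by Definition \ref{qt} the quantity to be computed is $Q_\tau(Z)=\inf\{z:\sum_i p_i F_i(z)\ge\tau\}$. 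The single tool I would rely on throughout is the quantile--CDF duality underlying Lemma \ref{quantInq}: for the lower quantile, $Q_{q}(X_i)\le z \iff F_i(z)\ge q$, equivalently $Q_{q}(X_i)> z \iff q> F_i(z)$. This lets me translate any statement about the inner objective $\min_{i:q_i\neq1}Q_{q_i}(X_i)$ into a statement about the numbers $F_i(z)$, which are exactly the ingredients of $F_Z$.

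Second, I would establish the identity by two inequalities, reading $q_i$ as the quantile level assigned to component $i$ and the common value $v:=\min_{i:q_i\neq1}Q_{q_i}(X_i)$ as a candidate threshold. For the direction ``$\le$'', take any feasible $\mathbf{q}$; by the duality each non-excluded component satisfies $Q_{q_i}(X_i)\ge v$, which forces $\prob(X_i<v\mid Y_i=1)=F_i(v^-)< q_i$, while each excluded component ($q_i=1$) is allowed to lie entirely below $v$, giving $F_i(v^-)\le 1=q_i$. Summing with weights $p_i$ yields $\prob(Z<v)=\sum_i p_i F_i(v^-)\le \sum_i p_i q_i\le \tau$, from which $v\le Q_\tau(Z)$. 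For the achievability direction ``$\ge$'', I would start from $z^*=Q_\tau(Z)$ and set $q_i=F_i(z^*)$ for every $i$: the components with $F_i(z^*)=1$ have their whole conditional support at or below $z^*$ and are precisely the ones dropped from the $\min$, while for the remaining components $Q_{q_i}(X_i)\ge z^*$ with equality for the binding index, so the objective equals $z^*$; the budget $\sum_i p_i q_i=F_Z(z^*)$ is controlled using the defining property $F_Z(z)<\tau$ for $z<z^*$.

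The conceptual point that makes the exclusion rule $q_i\neq1$ the correct modeling choice is that a component placed entirely below the threshold should not be permitted to pull the $\min$ down to its own (possibly small) supremum $Q_1(X_i)=\sup X_i$; dropping it lets the threshold be dictated only by the components that actually straddle the $\tau$-level. I also note that this is why the naive ``$\min$ over all $i$'' formula fails, and why the attainment of the maximum should be argued separately --- the feasible set $\{\mathbf q\in[0,1]^n:\sum_i p_iq_i\le\tau\}$ is compact and the objective is upper semicontinuous in $\mathbf q$, so the supremum is achieved.

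I expect the main obstacle to be the careful bookkeeping at atoms. Because the $X_i$ are discrete, $F_i$ and the quantile functions are step functions, and the assignment ``$q_i=F_i(z^*)$'' versus ``$q_i$ just above $F_i(z^{*-})$'' changes simultaneously the attained value $Q_{q_i}(X_i)$ and the budget $\sum_i p_iq_i$. The delicate case is a level $\tau$ that coincides with a jump of $F_Z$, where $F_Z(z^{*-})\le\tau\le F_Z(z^*)$ with a possible strict gap; there one cannot in general keep $v=z^*$ and the budget $\le\tau$ at once, and the strict-versus-nonstrict convention of Definition \ref{qt} must be tracked explicitly. I would handle this by taking $q_i$ on the straddling components up to (but, in the tie case, not past) the mass $F_i(z^*)$, passing to the supremum over feasible thresholds, and invoking the compactness/upper-semicontinuity argument above to conclude that the optimal value coincides with $Q_\tau\!\left(\sum_i X_iY_i\right)$.
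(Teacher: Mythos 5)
The first half of your argument (every feasible $\mathbf{q}$ has value at most $Q_\tau\left(\sum_i X_iY_i\right)$) is sound and is essentially the paper's own: your mixture identity $F_Z(z)=\sum_i p_iF_i(z)$ is the paper's Lemma \ref{sumProb}, and your strict inequality $F_i(v^-)<q_i$ for non-excluded $i$ plays the same role as the strict middle inequality in the paper's contradiction argument. The genuine gap is in the achievability direction. Your assignment $q_i=F_i(z^*)$ fails on both counts you need. First, it can be infeasible: $\sum_i p_iF_i(z^*)=F_Z(z^*)\ge\tau$, with strict inequality whenever $Z$ has an atom at $z^*$ (the typical case for a discrete $Z$), so the budget constraint is violated; your claim that the budget ``is controlled using $F_Z(z)<\tau$ for $z<z^*$'' conflates $F_Z(z^*)$ with $F_Z(z^{*-})$. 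Second, even ignoring feasibility, the value is wrong: $Q_{F_i(z^*)}(X_i)$ is the left endpoint of the flat piece of $F_i$ containing $z^*$, so it satisfies $Q_{F_i(z^*)}(X_i)\le z^*$, not $\ge z^*$. Concretely, take $n=2$, $p_1=p_2=1/2$, $X_1\mid Y_1=1$ uniform on $\{0,10\}$, $X_2\mid Y_2=1\equiv 5$, and $\tau=0.6$; then $z^*=Q_\tau(Z)=5$, but your $\mathbf{q}=(1/2,\,1)$ has budget $3/4>\tau$ and objective $Q_{1/2}(X_1)=0$. Here $X_1$ straddles $z^*$ with no atom at $z^*$, which is exactly the case your proposed repair (``up to but not past $F_i(z^*)$'') cannot fix, since $Q_q(X_1)\ge 5$ forces $q>F_1(5^-)=F_1(5)=1/2$, i.e., strictly past $F_i(z^*)$.

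The paper's construction avoids this: it sets $\bar q_i=\prob(X_i<z^*\mid Y_i=1)=F_i(z^{*-})$, proves the \emph{strict} inequality $\sum_i p_i\bar q_i=\prob(Z<z^*)<\tau$ (this is where finiteness of the support is used), and then bumps every coordinate, taking $q_i'=\min\{\bar q_i+\epsilon,1\}$ with $\epsilon=\bigl(\tau-\sum_i p_i\bar q_i\bigr)/n$. Feasibility holds because the strict slack absorbs the bump, and the paper's Lemma \ref{epsilon} gives $Q_{\bar q_i+\epsilon}(X_i)\ge z^*$ whenever $q_i'\neq 1$, so the objective is $\ge z^*$. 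This ``strict CDF plus $\epsilon$'' device is the one idea your write-up circles (``$q_i$ just above $F_i(z^{*-})$'') but never commits to. Your topological fallback also does not close the gap: by Lemma \ref{quantInq}, $q\mapsto Q_q(X_i)$ is left-continuous and non-decreasing, hence \emph{lower} semicontinuous, and the exclusion rule at $q_i=1$ only makes the objective jump upward, so the objective is not upper semicontinuous and attainment of the supremum cannot be deduced from compactness; in the paper, attainment is a consequence of the explicit construction, not of topology.
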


The key idea for the proof of Theorem \ref{valFunc} is to introduce a random variable $X_i$ such that its quantile $Q_{\tau}(X_i)=Q_{\tau}(v_{t+1}(s_i, \tau)+r_t(s,a,h(s,a,s_i)))$ for all $\tau \in [0,1].$ Then the right-hand side of (\ref{OPT}) is in the same form as (\ref{QX_i}) and Lemma \ref{XY} applies. By putting Theorem \ref{valFunc} together with (\ref{v_s_a}), we establish the relationship between $v_{t}(s,\tau)$ and $v_{t+1}(s',\tau')$ and build the foundation for a backward dynamic program to compute optimal value functions. Importantly, the algorithm derives the entire value function, i.e., the output we obtain at time $t$ is the function $v_t(\cdot,\cdot)$ rather than its evaluation at some specific $s$ and $\tau.$  

\subsection{Optimal Value and Optimal Policy}

\label{optimality}

In this section, we establish that the value functions obtained from the backward dynamic program correspond to the optimal value for the QMDP and thus define the optimal policy. The procedure for computing the value functions is illustrated in Figure \ref{fig:VALFUNC}. The optimization problem OPT takes $v_{t+1}$ as its argument and outputs $\tilde{v}_{t}(s,\tau,a)$; then by taking maximum over the action $a$, we obtain $v_{t}.$ Theorem \ref{value} verifies that the value function $v_0$ computed via backward dynamic programming is equal to the optimal quantile value.

\begin{figure}[t]
\centering
\includegraphics[height=2.8 in]{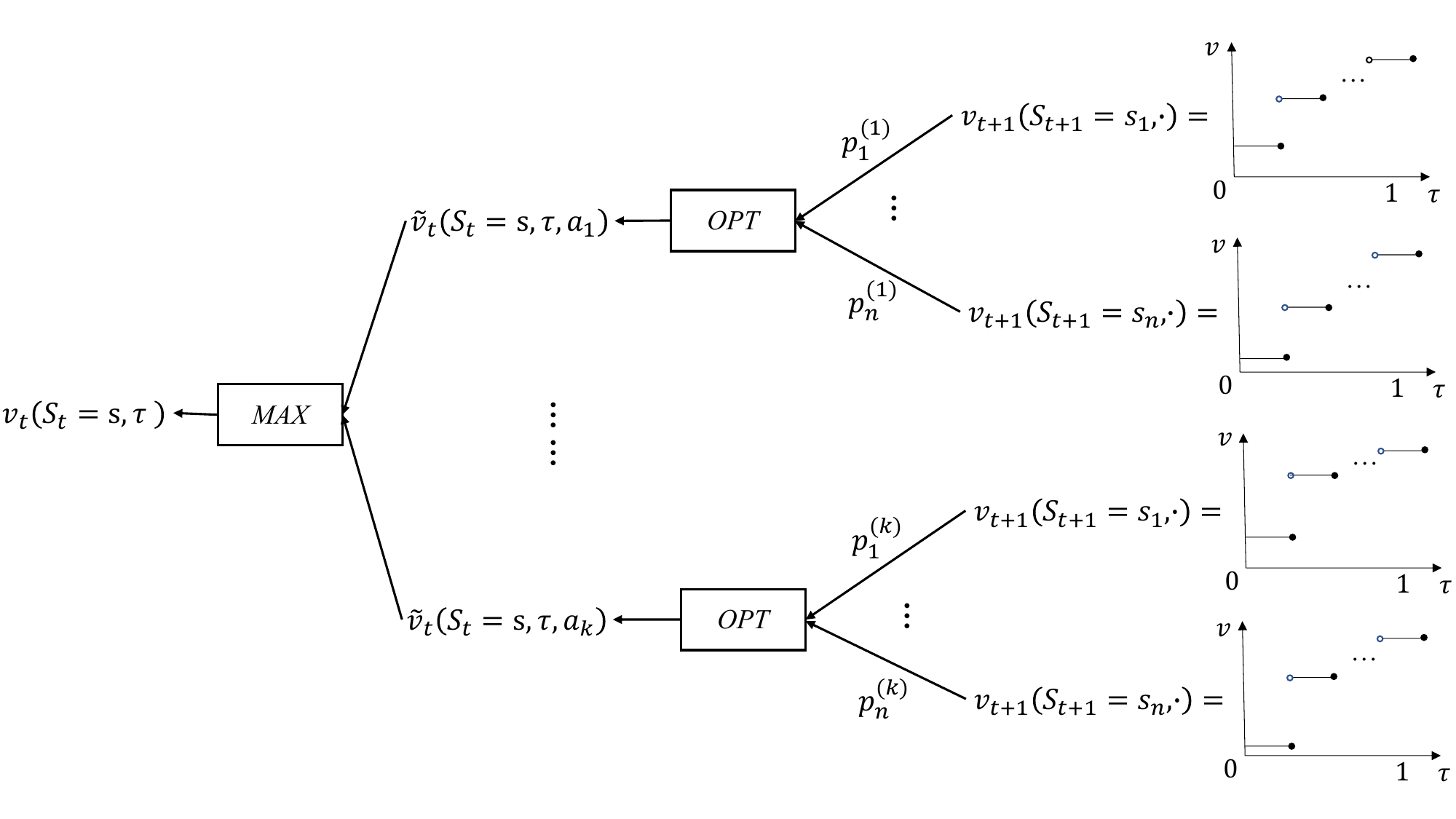}
\caption{Illustration of backward dynamic program for computing $v_t$ from $v_{t+1}$. Here $p_{i}^{(k)}=\prob(S_{t+1}=s_k|S_t=s, a=a_k).$ Without loss of generality, the immediate reward $r_t(S_t=s, a_k)$ is ignored in the schematic.}
\centering
\label{fig:VALFUNC}
\end{figure}

\begin{theorem}[Optimal Value Function] Let $v_T(s,\tau)=0$ for all $s\in \mathcal{S}$ and $\tau \in [0,1].$ Iteratively, we compute 
$$v_t(s,\tau) = \max_{a} \ OPT\left(s, \tau, a, v_{t+1}(\cdot, \cdot)\right),$$ 
for $t=T-1,...,0.$ Then we have
$$v_0(s,\tau) = \max_{\pi \in \Pi}\  Q^{\pi}_{\tau} \left[ \sum_{k=0}^{T-1}r_k(S_k, a_k, w_k)\right].$$
 \label{value}
\end{theorem}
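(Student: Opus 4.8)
The plan is to prove the statement by backward induction on $t$, reducing the full dynamic program to repeated application of Theorem \ref{valFunc}. The object actually being characterized is the optimal value function $v_t(s,\tau)$ of (\ref{val}); I would show that the sequence produced by the recursion $v_t(s,\tau)=\max_a OPT(s,\tau,a,v_{t+1}(\cdot,\cdot))$, initialized at $v_T\equiv 0$, coincides at every level with $\max_{\pi_{t:T}}v_t^{\pi_{t:T}}(s,\tau)$. Granting this, the case $t=0$ is exactly the claim, since $v_0^{\pi}(s,\tau)=Q^{\pi}_{\tau}[\sum_{k=0}^{T-1}r_k(S_k,a_k,w_k)\mid S_0=s]$ and maximizing over $\pi$ gives the right-hand side of the theorem.

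For the base case I would check that $v_T\equiv 0$ is the correct terminal value: at $t=T$ the sum $\sum_{k=T}^{T-1}r_k$ is empty, so $v_T^{\pi}(s,\tau)=Q_\tau(0)=0$ for every $\tau$ and every (vacuous) policy, hence the optimal value function at $T$ is identically zero. For the inductive step I would assume that the DP-computed function $v_{t+1}(\cdot,\cdot)$ already equals the optimal value function at time $t+1$ over the whole state--quantile domain. This hypothesis is precisely what licenses the use of Theorem \ref{valFunc}, which asserts $\widetilde{v}_t(s,\tau,a)=OPT(s,\tau,a,v_{t+1}(\cdot,\cdot))$ when $v_{t+1}$ is the optimal next-step value function. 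Combining this with the decomposition (\ref{v_s_a}), $v_t(s,\tau)=\max_a\widetilde{v}_t(s,\tau,a)$, yields $\max_a OPT(s,\tau,a,v_{t+1}(\cdot,\cdot))=\max_a\widetilde{v}_t(s,\tau,a)=v_t(s,\tau)$, so the DP output at level $t$ again equals the optimal value function and the induction closes.

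The main obstacle is conceptual rather than computational, and it is almost entirely absorbed into Theorem \ref{valFunc}: the quantile objective is non-Markovian, so a priori the optimal continuation policy $\pi_{(t+1):T}$ may depend on the full realized history through time $t$, and one cannot naively invoke a principle of optimality. The device that rescues the argument is the augmentation of the state by the quantile level: the inner maximization over $\mathbf{q}$ in OPT allocates a target quantile $q_i$ to each successor state $s_i$, and the inductive hypothesis guarantees that the best achievable reward conditional on landing in $s_i$ and aiming for quantile $q_i$ is exactly $v_{t+1}(s_i,q_i)$. Thus history enters only through the pair $(s_i,q_i)$, which is what makes the backward recursion legitimate. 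In writing the induction I would therefore be careful to carry $v_{t+1}$ as the entire function of $(s',\tau')$, not merely its value at the incoming $\tau$, since Theorem \ref{valFunc} consumes the full function.

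Finally I would attend to a technical point needed to make the reduction airtight: the maxima defining $v_t(s,\tau)$ and the maximum over $\mathbf{q}$ in OPT must be attained, so that ``optimal value function'' is well defined at each level. This follows from the finiteness of $\mathcal{S}$ and $\mathcal{A}$ together with the monotonicity and left-continuity of quantiles in Lemma \ref{quantInq}: the feasible set $\{\mathbf{q}:\sum_i p_iq_i\le\tau,\ q_i\in[0,1]\}$ is compact, and because each $v_{t+1}(s_i,\cdot)$ is non-decreasing and left-continuous the relevant supremum is achieved. I would also make explicit that the inductive hypothesis is invoked as an equality between the optimal value function and the OPT value---built from the two directions (an achievability construction and a matching upper bound valid for arbitrary history-dependent policies) already present in Theorem \ref{valFunc}---so that the displayed chain of equalities is an equality of optima rather than a one-sided bound.
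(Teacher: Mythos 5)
Your proof is correct and follows essentially the same route as the paper: the paper's own proof is a one-line reduction, noting that the claim ``follows directly from Theorem \ref{valFunc} using the fact that $v_t(s,\tau)=\max_a \widetilde{v}_t(s,\tau,a)=\max_a OPT(s,\tau,a,v_{t+1}(\cdot,\cdot))$,'' which is precisely your inductive step. Your write-up merely makes explicit what the paper leaves implicit --- the backward induction, the trivial base case $v_T\equiv 0$, and the attainment of the maxima --- so it is a more careful rendering of the same argument, not a different one.
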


Theorem \ref{policy} characterizes the optimal policy. Unlike the case of MDP, the optimal policy $\pi_t$ for QMDP is a function of the history $h_t=(S_0,a_0,...,S_t)$ instead of simply the current state $S_t$ -- but all of the history $h_t$ is summarized in the quantile level $\tau_t.$ In other words, $\tau_t$ is a function (although not explicit) of the history and plays a role like that of a ``summary statistic.'' Theorem \ref{policy} tells us that the optimal policy $\pi_t$ is a function of only the current state $S_t$ and the ``summary statistic'' $\tau_t.$   Intuitively, this augmented quantile level $\tau_t$ reflects the historical performance of the MDP. A higher quantile level will encourage a more aggressive policy in the remaining periods while a lower quantile level will encourage conservative moves. For example, if we start with $\tau_0=\tau=0.5$, which means that our ultimate goal is to maximize the median cumulative reward over $0$ to $T$, then at some time $t$ in between, if we have already achieved a relatively high reward, i.e., a large $\sum_{k=0}^t r_k,$ the augmented quantile level $\tau_t$ will decrease to some value smaller than $0.5$ accordingly. This will drive us to take relatively conservative moves in the future, and vice versa.

\begin{theorem}[Optimal Policy]
We augment the state $S_t$ with a quantile $\tau_t$ to assist in the execution of the optimal policy. At the initial state $s_0$ and $\tau_0 = \tau,$ we define our initial policy function as
$$\pi_0: \mu_0(s_0,\tau_0) = \operatorname*{arg\,max}_a \  \widetilde{v}_0(s_0,\tau_0, a).$$

\noindent At time $t,$ we execute the output of $\mu_t$ and then the process reaches state $S_{t+1}$. Let $\mathbf{q}^*$ be the solution to the optimization problem  $OPT\left(S_t, \mu_t(S_t,\tau_t), \tau_t, v_{t+1}(\cdot,\cdot)\right)$. Here $v_{t+1}(\cdot,\cdot)$ is computed as in Theorem \ref{value}. The term $\tau_{t+1}$ is assigned as 
$$\tau_{t+1} = q^*_i$$
for the specific $i$ that satisfies $S_{t+1}=s_i.$ We define $\pi_{t+1}$ as 
$$\pi_{t+1}: \mu_{t+1}(S_{t+1},\tau_{t+1}) = \operatorname*{arg\,max}_a  \  \widetilde{v}_{t+1}(S_{t+1},\tau_{t+1}, a).$$
The policy $\pi=(\pi_0,...,\pi_T)$ defined above is the optimal policy for the objective (\ref{quantileMDP}) and obtains the optimal value $v_0(s_0,\tau_0)$.
\label{policy}
\end{theorem}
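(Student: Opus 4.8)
The plan is to prove the theorem by backward induction, showing that the augmented-state policy $\pi$ defined in the statement attains the optimal value function at every node, and then to specialize to $t=0$. Write $R_t(s,\tau')$ for the random continuation reward $\sum_{k=t}^{T-1} r_k(S_k,a_k,w_k)$ generated by executing $\pi$ when the process is started in state $s$ with augmented quantile $\tau_t=\tau'$ at time $t$, so that all subsequent actions and quantile updates follow the recipe of the theorem. Since $\pi$ is an admissible history-dependent policy and $\tau_t$ is a deterministic function of the realized history, the definition of $v_t$ in (\ref{val}) already yields the upper bound $Q_{\tau'}(R_t(s,\tau'))\le v_t(s,\tau')$, while Theorem \ref{value} identifies $v_0(s_0,\tau)$ with the optimum in (\ref{quantileMDP}). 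Hence it suffices to establish the reverse inequality \[ Q_{\tau'}(R_t(s,\tau'))\ \ge\ v_t(s,\tau') \quad\text{for all } t,\, s,\, \tau', \] since taking $t=0$, $s=s_0$, $\tau'=\tau$ then forces equality and the claimed optimal value.

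The base case $t=T$ is immediate: the empty sum gives $R_T\equiv 0$ and $v_T\equiv 0$. For the inductive step I fix $(s,\tau')$, set $a^*=\mu_t(s,\tau')=\operatorname*{arg\,max}_a\widetilde v_t(s,\tau',a)$, and let $\mathbf q^*$ be the maximizer of $OPT(s,\tau',a^*,v_{t+1}(\cdot,\cdot))$ used in the construction. Conditioning on the realized successor $S_{t+1}=s_i$, which occurs with probability $p_i=\prob(S_{t+1}=s_i\mid S_t=s,a_t=a^*)$ and recovers $w_t=l_t(s,a^*,s_i)$ deterministically, the policy sets $\tau_{t+1}=q^*_i$ and thereafter behaves as the fresh run defining $R_{t+1}(s_i,q^*_i)$. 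Writing $Y_i=1\{S_{t+1}=s_i\}$ and $X_i=r_t(s,a^*,l_t(s,a^*,s_i))+R_{t+1}(s_i,q^*_i)$, the Markov property gives the decomposition $R_t(s,\tau')=\sum_{i=1}^n X_iY_i$, where the conditional law of $X_i$ given $Y_i=1$ is exactly that of $r_t(\cdot)+R_{t+1}(s_i,q^*_i)$.

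I then apply Lemma \ref{XY} to this sum. The lemma expresses $Q_{\tau'}(\sum_i X_iY_i)$ as the optimal value of $\max_{\mathbf q}\min_{i:q_i\ne1}Q_{q_i}(X_i)$ subject to $\sum_i p_i q_i\le\tau'$; since $\mathbf q^*$ obeys this very constraint with the identical weights $p_i$, it is a feasible point of the lemma's program and therefore furnishes the lower bound $Q_{\tau'}(R_t)\ge\min_{i:q^*_i\ne1}Q_{q^*_i}(X_i)$. Because $r_t(s,a^*,l_t(s,a^*,s_i))$ is constant once $s_i$ is fixed, the quantile shifts by that constant, and the inductive hypothesis at $(s_i,q^*_i)$ gives $Q_{q^*_i}(X_i)=r_t(\cdot)+Q_{q^*_i}(R_{t+1}(s_i,q^*_i))\ge r_t(\cdot)+v_{t+1}(s_i,q^*_i)$. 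Taking the minimum over $i$ recovers precisely the objective of $OPT$ evaluated at its maximizer $\mathbf q^*$, so $Q_{\tau'}(R_t)\ge OPT(s,\tau',a^*,v_{t+1})=\widetilde v_t(s,\tau',a^*)=v_t(s,\tau')$, where the last equalities use Theorem \ref{valFunc} together with $a^*=\operatorname*{arg\,max}_a\widetilde v_t(s,\tau',a)$ and (\ref{v_s_a}). This closes the induction.

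The crux, and the step I expect to require the most care, is the reliance on a \emph{single}-quantile induction hypothesis: one might fear that invoking Lemma \ref{XY} demands the entire conditional distribution of each $X_i$, whereas the argument only ever uses the one value $Q_{q^*_i}(R_{t+1}(s_i,q^*_i))$. This succeeds precisely because I do not re-optimize inside the lemma but instead insert the already-chosen $\mathbf q^*$ as a feasible point, turning Lemma \ref{XY}'s maximum into a lower bound that Theorem \ref{valFunc} identifies with $v_t$. Two technical points merit explicit remark in the writeup: the Markov consistency guaranteeing that the conditional law of $X_i$ given $Y_i=1$ coincides with the law generating $R_{t+1}(s_i,q^*_i)$ (so the quantile statement transfers across the induction), and the boundary conventions for $\tau'\in\{0,1\}$, where the index set $\{i:q^*_i\ne1\}$ may degenerate and is handled by the $Q_0=\inf$, $Q_1=\sup$ conventions of Definition \ref{qt}.
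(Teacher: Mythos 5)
Your proof is correct and takes essentially the same route as the paper's: a backward induction in which the optimizer $\mathbf{q}^*$ of $OPT$ is inserted as a feasible point of Lemma \ref{XY}'s program to lower-bound the quantile achieved by the constructed policy, which is exactly the construction the paper's proof delegates to the last part of the proof of Theorem \ref{valFunc}. Your version merely makes explicit what the paper leaves implicit (the admissibility upper bound $Q_{\tau'}(R_t)\le v_t(s,\tau')$ and the Markov consistency of the conditional laws), so the substance is identical.
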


\subsection{QMDP and Other Risk Measures}

\label{illustration}

We compare QMDP to other risk measures using a simple example. Consider the following two-period gambling game: In time period one, a gambler participates in the game and receives or loses \$50 with equal probability; in time period two, the gambler has an option to participate in one of two fair games with, respectively, an equal chance of winning or losing \$20 or \$100. In the MDP framework, the two options are equivalent because both output a zero expected return. 

%In this way, the QMDP model complements existing risk-sensitive MDP models.

Using MDP with a nested risk measure \citep{ruszczynski2010risk, jiang2017risk}, the objective for this gambling game is
$$ \max_{a} \rho_\theta(r_1 + \rho_\theta(r_{2,a})),$$
where $r_1$ denotes the random reward in time period one, $a$ denotes the action, $r_{2,a}$ denotes the random reward in time period two, and $\rho_{\theta}(\cdot)$ is the risk measure to be specified by the decision maker where the parameter $\theta$ reflects the decision maker's risk attitude. Assuming the risk measure $\rho_{\theta}(\cdot)$ is monotonically non-decreasing, the optimal action $a$ is determined by
$$\max_{a} \rho_\theta(r_{2,a}).$$
The optimal action  for this model does not take into account the reward history. This allows for a dynamic programming algorithm that solves for the optimal decision (see \cite{ruszczynski2010risk} and \cite{jiang2017risk}), but it fails to capture the subsequent risk attitude of the gambler. If $r_1=\$50,$ the gambler might prefer to adopt the more conservative option in the second time period to guarantee winning at least \$30 by the end. On the other hand, if $r_1=-\$50,$ the gambler might prefer to participate in the risky game (\$100) to compensate for the loss; by taking the \$20 game in time period two, the gambler is doomed to lose money, whereas by taking the \$100 game, there is a chance of winning money at the end. To capture this type of risk attitude, the risk measure $\rho_{\theta}(\cdot)$ at time period two should be dependent on the outcome of $r_1$, which cannot be covered by a nested MDP formulation. 

%Conceptually, this means that the outcome from the first time period may affect the risk attitude in the following period. 
\begin{figure}[!tbp]
  \centering
  \subfloat[Optimal value function]{\includegraphics[height=2 in]{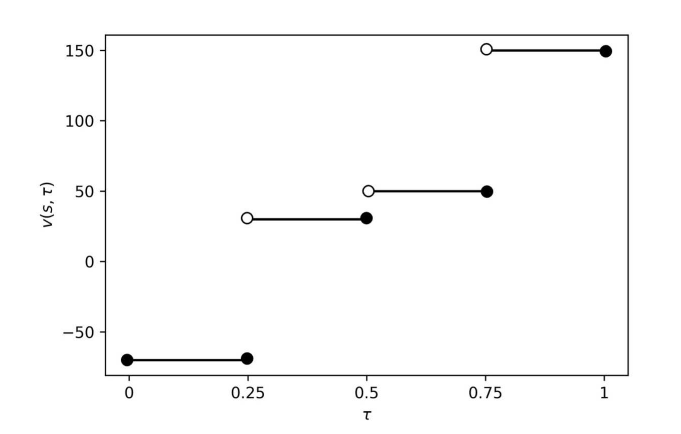}\label{fig:opt_val_gambling_game}}
  \hfill
  \subfloat[Optimal policy]{\includegraphics[height=2 in]{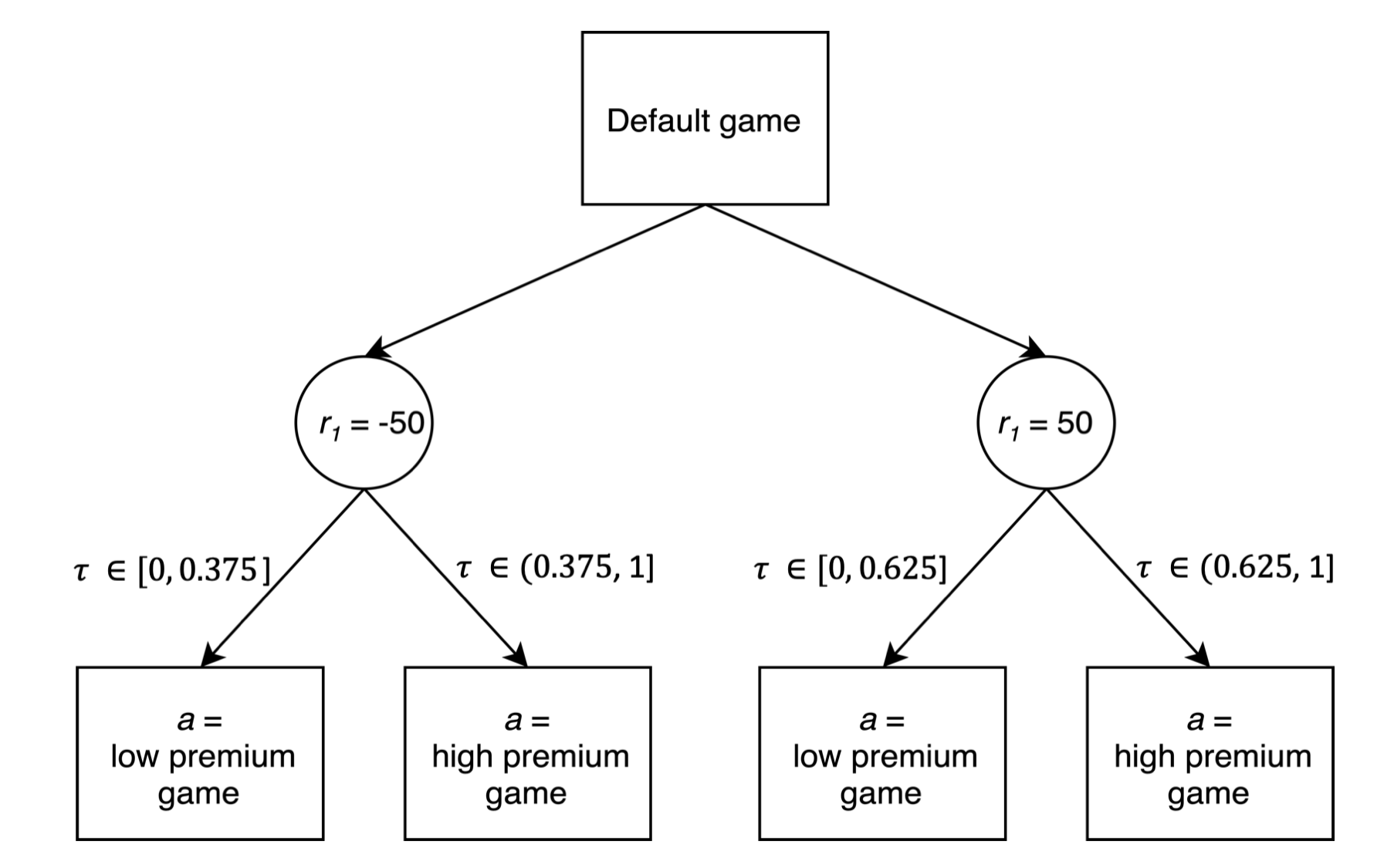}\label{fig:opt_policy_gambling_game}}
  \caption{QMDP optimal value function and optimal policy for two-period gambling game.}
  \label{fig:gambling_game_solution}
\end{figure}

In the QMDP model, the objective function is
$$\max_a Q_{\tau}\left(r_1+r_{2,a}\right)$$
where $\tau$ is the quantile level specified by the gambler. Figure \ref{fig:gambling_game_solution} shows the optimal QMDP value function, calculated as in Section \ref{solveQMDP}, and the optimal policy, obtained through forward execution following Theorem \ref{policy}. The optimal action in time period two is affected by both the risk parameter $\tau$ and the outcome of time period one. For example, if $\tau=0.4$, which is a slightly conservative attitude, then the gambler will participate in the less risky (\$20) game if $r_1=\$50$ but will participate in the risky (\$100) game if $r_1=-\$50$. If $\tau=0.6$, which is a slightly aggressive attitude, then the gambler will participate the less risky game if $r_1=\$50$ just as in the case of a conservative attitude; however, if there were no time period one but only time period two, then the gambler would participate in the riskier game for the 50\% chance of winning \$100. In QMDP, the risk attitude that governs the optimal action of each time period can change dynamically according to the outcomes in the past time periods. QMDP can search for the optimal policy in a more general class of (non-Markovian) policies than the nested risk measure models.

Using a utility function-based MDP formulation, the objective function is
$$\max_{a} u_\theta(r_1+r_{2,a}),$$
where $u_\theta(\cdot)$ is a utility function and subscript $\theta$ denotes risk attitude.
In such a model there is no transparent connection between the risk attitude (the choice of $\theta$) and the outcome $r_1+r_{2,a}$, and there is no clear characterization of the outcome or the objective function value under different choices of $\theta$ unless we repeatedly solve the problem with different specifications of $\theta$. The QMDP model provides a more explicit visualization of the return by characterizing the risk of the cumulative reward in a distributional manner. 

\section{Algorithms and Computational Aspects}

In this section, we present our algorithm for solving QMDP and discuss its computational aspects. As mentioned earlier, the key for computing the value function is to solve the optimization problem OPT. Thus, we first provide an efficient algorithm for solving OPT and then analyze its complexity.

\subsection{Algorithm for Solving the Optimization Problem OPT}

We formulate $OPT(s,\tau, a,  v_{t+1}(\cdot,\cdot))$ in a more general way as follows:

\begin{gather} 
 OPT \  \triangleq\  \max_{\mathbf{q}} \min_{i \in \{ q_i \neq 1 \mid i= 1,2,...,n\}}  \ \   g(i,q_i),
  \label{optGeneral} \\
  \text{subject to} \ \ \sum_{i=1}^n p_i q_i \le \tau,  \sum_{i=1}^n p_i=1,\nonumber\\
  q_i \in [0,1], \text{\ \ \ for \ \ } i =1,...,n.\nonumber 
\end{gather}
Here $\tau$ and the $p_i$'s are known parameters. The decision variable is $\mathbf{q}=(q_1,...,q_n).$ We introduce a function $g: \{1,...,n\} \times [0,1] \rightarrow \mathbb{R}$.  We assume that $g(i,\cdot)$ is a left continuous and piecewise constant function with finite breakpoints for all $i$. The variable $i$ refers to the state in the QMDP settings. We will show later that these assumptions are satisfied for value functions of QMDP with finite state space and discrete rewards. Therefore, we can represent and encode each function $g(i,\cdot)$ with a set of breakpoint-value pairs $$\left\{\left(b^{(1)}_i, v^{(1)}_i\right),... ,\left(b^{(n_i)}_{i}, v^{(n_i)}_{i}\right)\right\}$$ where $n_i$ is the number of pairs. Then we have
$$g(i, x) =   \left\{\begin{array}{ll}
              v^{(1)}_i, & \ \ \  \text{for}\ x \in \left[b_{i}^{(1)}, b_{i}^{(2)}\right], \vspace{0.1 cm} \\ 
              v^{(k)}_i, & \ \ \ \text{for}\ x \in \left(b_{i}^{(k)}, b_{i}^{(k+1)}\right] \text{ and } k = 2,...n_i.
                \end{array} \right. $$
Here we define $b_{i}^{(1)}=0$ and $b_{i}^{(n_i+1)}=1$ for all $i.$ 

\begin{algorithm}[ht!]
\caption{Algorithm for the Optimization Problem (\ref{optGeneral})}\label{euclid1}
\label{algoForOPT}
\begin{algorithmic}[1] 
\State \textbf{Input:} $\{p_i, g(i,\cdot) \mid i=1,..,n\}$ 
\State \textbf{Initialize}
$k_1=...=k_n=1,$  $\tau_{tmp}=0.$
\State Let $u_i = g(i,0)= v^{(1)}_{i}$, $i \in S = \{1,...,n\}$ and $u = \min_{i\in S} u_i.$ 
\State Let $f(0) = u.$
\While {$S \neq \emptyset$}
\State $S_0 = \text{argmin}_{i \in S} \ u_i$ 
\State $\tau_{new} = \tau_{tmp}$
\For {$i \in S_0$}
\If {$k_i=n_i$}
\State   $\tau_{new} = \tau_{new}  + p_i \left(1-b^{(k_i)}_{i}\right)$
\State $S = S \backslash \{i\}$
\Else
\State $\tau_{new} = \tau_{new}  + p_i \left(b^{(k_{i}+1)}_{i} - b^{(k_i)}_{i}\right)$
\State $k_i = k_i+1$
\State Update $u_i = v^{(k_i)}_{i}$
\EndIf
\EndFor
\If {$\tau_{tmp} = 0$}
\State Let $f(\tau) = u$ for $\tau \in [\tau_{tmp}, \tau_{new}]$
\Else
\State Let $f(\tau) = u$ for $\tau \in (\tau_{tmp}, \tau_{new}]$
\EndIf
\State Update $\tau_{tmp} = \tau_{new}$
\State Update $u = \min_{i \in S} \ u_i$
\EndWhile
\State \textbf{Return} $f(\cdot)$ 
\end{algorithmic}
\end{algorithm}

Algorithm \ref{algoForOPT} solves OPT. The idea is quite straightforward: we start with $q_i=0$ for all $i$ and gradually increase the $q_i$ that has the smallest value of $g(i,q_i)$ until the constraint $\sum_{i=1}^n p_i q_i \le \tau$ is violated. The $g(i,q_i)$ that has smallest value is the bottleneck for the objective function value. By increasing the corresponding $q_i,$ we keep improving the objective function value.  The output of the algorithm $f(\cdot)$ restores the optimal values of OPT as a function of $\tau \in [0,1]$.

We illustrate the algorithm with an example of $n=3$ in Figure \ref{fig:AlgorithmInterpretation}. In this example, we have three functions $g(i, \cdot)$ represented by three gray rectangles with corresponding transition probabilities denoted by $p_i$. We want to determine $f(\cdot)$, which is indicated by the red rectangle for each step. Following Algorithm \ref{algoForOPT}, we initialize input  $k_1 = k_2 = k_3 =1, \tau_{tmp} = 0$ and $u_1 = 10, u_2 = 8, u_3 = 10$. We then find (Step 1) $u = \min_{i\in S =\{1, 2, 3\}} u_i = 8$, and thus $f(0) = 8$. To find the upper bound $b^{(2)}$ for the value of 8, we execute the ``while" loop. The only $g(i, \cdot)$ that has value of 8 is $g(2,\cdot)$ so we assign $S_0 = {2}$ and $\tau_{new} = \tau_{tmp} = 0$. Since $k_2 =1$ and $n_2 = 2$ in this example, we can update $\tau_{new} = \tau_{new}  + p_i \left(b^{(k_{2}+1)}_{2} - b^{(k_2)}_{2}\right) = 0 + 0.5(0.4-0) = 0.2 $. Thus, in Step 1 we have $f(\tau) = 8$ for $\tau \in [0, 0.2]$. The algorithm keeps updating $f(\cdot)$ until the set $S$ becomes empty. In the end (rightmost panel of Figure \ref{fig:AlgorithmInterpretation}) we have fully specified $f(\cdot)$, and thus we have found the optimal values of OPT as a function of $\tau \in [0,1]$.

\begin{figure}[t]
\centering
\includegraphics[height=1.3 in]{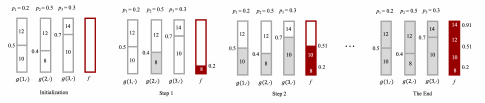}
\caption{\small Step-by-step execution of Algorithm \ref{algoForOPT} with $n=3$ sample $g(\cdot,\cdot)$ functions. Numbers inside and along the blocks represent the values and breakpoints of the input functions $g(i,\cdot)$. The shaded regions reflect the progress of the algorithm. In the end, the output is $f.$}
\centering
\label{fig:AlgorithmInterpretation}
\end{figure}

\subsection{Algorithm for Solving QMDP}

In this subsection, we summarize the previous results and provide the algorithm for solving QMDP as Algorithm \ref{algoForQMDP}. It is obtained by putting together Algorithm \ref{algoForOPT} with Theorems \ref{value} and \ref{policy}. One advantage of this dynamic programming algorithm is that the optimal value functions and the optimal policies at all states and quantiles are computed in a single pass. Indeed, this single-pass property is necessary for the quantile objective, because the optimal value and action at time $t$ could depend on the value function at time $t+1$ for all the quantiles. 

\begin{algorithm}[ht!]
\caption{Algorithm for Solving QMDP}\label{euclid}
\label{algoForQMDP}
\begin{algorithmic}[1] 
\State \textbf{Input:} Transition probabilities $\mathcal{P}(S_t, a, S_{t+1}),$ reward function $r_t(s_t, a_t, w_t),$ time horizon $T$.
\State \textbf{Computing Value Functions: } 
\State \textbf{Initialize}
Let $\mathcal{S}=\{s_1,...,s_n\},$ $v_{T}(s_i, \tau)=0$ for all $i = 1,...,n$ and $\tau \in [0,1].$
\For {$t = T-1,....,0$}
\For {$i=1,...,n$}
\For {$a \in \mathcal{A}_t$}
\State $p_{tmp}(s_j) = \mathcal{P}(S_t=s_i, a, S_{t+1}=s_j) $ for $j=1,...,n$
\State $v_{tmp}(s_j, \tau) = v_{t+1}(S_{t+1}=s_j, \tau) + r_t$ for $j=1,...,n$ and $\tau \in [0,1]$
\State $\widetilde{v}(s_i, \tau, a) = OPT\left(p_{tmp}(\cdot),v_{tmp}(\cdot,\cdot)\right)$
\EndFor
\State $v_t(s_i, \tau) = \max_a \widetilde{v}(s_i, \tau, a)  $
\EndFor
\EndFor
\State \textbf{Output:} $\{v_t(s, \tau), \widetilde{v}_t(s, \tau, a) \}_{t=0}^{T-1}$ for all $s \in \mathcal{S},$ $a \in \mathcal{A}$  and $\tau \in [0,1]$
\State \textbf{Execution: } 
\State \textbf{Initialize} $S_0 = s$ and our goal is to maximize $\tau$ quantile. Let $R=0$ and $\tau_0=\tau.$
\For {$t = 0,...,T-1$}
\State Take action $a_t = \arg\max_a  \widetilde{v}_t(S_t, \tau_t, a)$
\State Transit from $S_t$ to $S_{t+1} = s_j$ for some $j \in \{1,...,n\}$
\State $R = R+r_t(s_t,a_t,s_{t+1})$
\State Let $\mathbf{q}^*$ be the optimizer of $OPT(S_t, a_t, v_{t+1}(\cdot,\cdot))$
\State Update $\tau_{t+1}=\mathbf{q}^*_{j}$
\EndFor
\State \textbf{Output:} Cumulative reward $R$
\end{algorithmic}
\end{algorithm}

\subsection{Complexity Analysis and Approximation}

The computational cost of our algorithm for solving QMDP (Algorithm \ref{algoForQMDP}) is mostly concentrated in computing the value functions. It is easy to show that all the value functions are piecewise constant. This is because when the input functions of OPT are piecewise constant, the OPT procedure will output a piecewise constant function as well. Also, it can be readily seen that the complexity of Algorithm \ref{algoForOPT} is linear in the number of breakpoints for its output functions. Based on these facts, we have the following proposition.

\begin{proposition}
When the rewards are integer and bounded, $|r_t|\le R$ for all $t,$ then the complexity of Algorithm \ref{algoForOPT} for computing value functions for QMDP is $O\left(AST\cdot\max(RT,S)\right).$ Here $T$ is the length of the time horizon, and $A=|\mathcal{A}|$  and $S = |\mathcal{S}|$ are the sizes of the action and state space, respectively. 
\label{compl}
\end{proposition}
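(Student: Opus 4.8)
The plan is to bound the total cost by multiplying the number of times the subroutine $OPT$ is invoked by the cost of a single invocation, after first establishing a bound on the number of breakpoints that any value function can carry. The integrality and boundedness of the rewards enter precisely through this breakpoint bound, so I would prove that first.

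First I would argue the breakpoint bound. Since each $r_t$ is an integer with $|r_t|\le R$, the cumulative reward $\sum_{k=t}^{T-1} r_k$ starting from any time $t$ is an integer lying in $[-(T-t)R,(T-t)R]\subseteq[-RT,RT]$, so it attains at most $2RT+1$ distinct values. The value function $v_t(s,\cdot)$ is the optimal $\tau$-quantile of this cumulative reward; because the optimal quantile is always one of these attained integer values and, by Lemma \ref{quantInq}, $v_t(s,\cdot)$ is non-decreasing and piecewise constant in $\tau$, the number of its breakpoints is at most the number of distinct values it takes, i.e.\ $O(RT)$. The same reasoning, applied to the cumulative reward from $t$ onward, shows that the output $\widetilde v_t(s,\cdot,a)$ of each $OPT$ call also has $O(RT)$ breakpoints.

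Next I would count the $OPT$ invocations and bound the per-call cost. The value-function computation in Algorithm \ref{algoForQMDP} consists of three nested loops -- over the horizon ($T$ iterations), the states ($S$ iterations), and the actions ($A$ iterations) -- with a single evaluation of $OPT$ in the innermost loop, giving $O(AST)$ invocations. Each call takes as input $S$ functions $v_{t+1}(s_i,\cdot)$, one per state, and produces the $O(RT)$-breakpoint output $\widetilde v_t(s,\cdot,a)$. Invoking the earlier observation that the running time of Algorithm \ref{algoForOPT} is linear in the number of breakpoints of its output, together with the $O(S)$ work required merely to initialize and sweep over the $S$ input functions, each call runs in time $O(RT+S)=O(\max(RT,S))$. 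Multiplying yields $O(AST)\cdot O(\max(RT,S))=O(AST\cdot\max(RT,S))$, as claimed.

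I expect the delicate step to be the per-call cost of $O(\max(RT,S))$. The subtlety is that there are $S$ input functions, each carrying up to $O(RT)$ breakpoints, so a naive accounting of the breakpoint sweep in Algorithm \ref{algoForOPT} would suggest $O(S\cdot RT)$ per call; obtaining $O(\max(RT,S))$ requires leaning on the linear-in-output-breakpoints property and an amortized analysis of the sweep, together with the $O(RT)$ ceiling on the number of distinct output values that integrality forces. The other point needing care is the breakpoint bound itself: one must verify that the optimization inside $OPT$ never produces an optimal quantile outside the finite set of attainable cumulative-reward values, so that no spurious breakpoints are created as the recursion passes from $t+1$ to $t$.
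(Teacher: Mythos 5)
Your proposal follows essentially the same argument as the paper's proof: integrality and boundedness of rewards cap the cumulative reward at $RT$, hence every value function is piecewise constant with $O(RT)$ breakpoints, each of the $O(AST)$ calls to $OPT$ costs $O(RT)$ for the breakpoint sweep plus $O(S)$ for reading and writing the per-state inputs, giving $O(\max(RT,S))$ per call and $O\left(AST\cdot\max(RT,S)\right)$ overall. If anything, you are more explicit than the paper about the one delicate point (that a naive per-call accounting over $S$ input functions each with $O(RT)$ breakpoints would suggest $O(S\cdot RT)$), which the paper's proof passes over silently.
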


The proof of this proposition is straightforward: When the reward is integer and bounded by $R,$ the cumulative reward is bounded by $RT.$ Thus any value function has at most $RT$ breakpoints, which means that each call of OPT will induce at most $O(RT)$ complexity. Additionally, each call of OPT will have a  read and write complexity of $O(S)$. Therefore each iteration has $O(\max(RT,S))$ complexity. Since there are $AST$ iterations in total, the overall complexity is $O\left(AST\cdot\max(RT,S)\right).$ 

Though the algorithms work for both integer and non-integer cases, the analysis is more complicated when the rewards are non-integer because we have no simple way to bound the number of breakpoints for the value functions. The value function can become ``exponentially'' complicated as the backward dynamic programming proceeds, so that the cost to restore the value function will also grow exponentially. \cite{nemirovski2006convex} pointed out that the computation of the distribution of the sum of independent random variables is already NP-hard.
To prevent this explosion, one can either truncate the rewards to integers or create approximations of the value functions.  For the truncation approach, if we still want to preserve computational precision, we can scale up the rewards before truncation. For the approximation approach, we would restore the value function at $N$ uniform breakpoints. The choice of $N$ is up to the user and can be as large as, for example, $10,000,$  which means that we restore the value function only for all the quantile values with an interval of $0.0001$. 

From the above analysis, we observe that the bottleneck for the complexity of our algorithm lies in the complexity of the value function. In a traditional MDP, the value function is a function of the state $s$ and time stamp $t$. In QMDP, for each given $s$ and $t,$ we need to compute and retain the optimal values for all the quantiles in order to derive the value function for time $t-1$. Therefore, there is not much room for improvement on this complexity upper bound in a generic setting. In Appendix C, we present numerical experiments that further illustrate the computational aspect of our QMDP algorithm. QMDP is unsurprisingly more time-consuming compared to the conventional nested risk measure based model. However, as noted earlier, QMDP outputs the optimal value function and optimal policy in one single pass of dynamic programming, whereas an additional simulation procedure is needed to trade off risk versus reward for other risk-sensitive MDP models. This additional simulation may result in a computation time that is significantly larger than that needed to solve QMDP.

\section{Extensions}
In this section, we discuss several extensions of the QMDP model. We extend the model to solving CVaR MDP (Section \ref{CVar}) and present a time-consistency result for the quantile risk measure (Section \ref{riskmeasure}). We establish the optimal value and policy for the infinite-horizon case (Section \ref{InfMDP}).

\subsection{Conditional Value at Risk}
\label{CVar}
In this section, we show how the dynamic programming idea in QMDP extends to the CVaR objective. We follow the characterization of \cite{rockafellar2002conditional} for a definition of CVaR.
\begin{definition}
For $\tau \in (0,1),$ the conditional value at risk (CVaR) at level $\tau$ is defined as 
$$\text{CVaR}_{\tau}(X) \triangleq Q_{\tau}(X) + \frac{1}{1-\tau} \E\left[X-Q_{\tau}(X)\right]^+.$$
\label{cvarDef}
\end{definition}
We consider an alternative objective, that of maximizing the CVaR of the cumulative reward. 
\begin{equation}
\max_{\pi \in \Pi} \ \text{CVaR}^{\pi}_{\tau} \left[\sum_{t=0}^{T-1}r_t\left(S_t, a_t, w_t\right)\right].
\label{quantileMDP1}
\end{equation}
\cite{Bauerle2011} and \cite{Yu2017} solved the CVaR MDP problem via an augmented variable but their approach was computationally intensive. Our results using the QMDP model reveal the key step in the dynamic programming for CVaR MDP as an optimization problem that is similar to our OPT problem. \cite{chow2014algorithms} considered MDP with a CVaR constraint, \cite{Carpin} developed approximate algorithms for CVaR MDP under a total cost formulation, and \cite{chow2015risk} solved CVaR MDP for the case of an infinite horizon and discounted reward. The method presented here complements this line of literature, and the core part of our dynamic programming procedure shares the same spirit as the CVaR decomposition (proposed in \cite{pflug2016time} and later exploited by \cite{chow2014algorithms}).

As for the quantile objective, we define the value function
$$u_t^{\pi_{t:T}}(s, \tau)  \triangleq  \text{CVaR}^{\pi}_{\tau} \left[\sum_{k=t}^{T-1}r_k\left(S_k, a_k, w_k\right) \Big|S_t =s \right].$$
Here $\pi_{t:T} = (\mu_{t},...,\mu_{T-1})$ denotes the policy and the action $$a_k = \mu_k(\mathcal{H}'_k) = \mu_{k}(S_t,a_t,...,S_k)$$ for $k=t,...,T-1.$

\begin{theorem}[CVaR Value Function]
Let $\mathcal{S} = \{s_1,...,s_n\}$ and 
$u_{T}(s, \tau) = u'_T(s, \tau) = 0$ for all $s \in \mathcal{S} $ and $\tau \in (0,1).$
Then,
$$\tilde{u}'_{t}(s,\tau,a) = OPT(s, \tau, a, u_{t+1}'(\cdot,\cdot))$$
and $\mathbf{q}^*=(q_1^*,...,q_n^*)$ as the optimal solution to the OPT problem (dependent on $s$ and $\tau$).
  \begin{align*}
 \tilde{u}_{t}(s,\tau,a) & =   \frac{1}{1-\tau} \sum_{i=1}^n p_i(1-q_i)\left[ u_{t+1}(s_i,q_i^*) + r_t\left(s, a, w_t\right)\right] \\ 
  & =  \frac{1}{1-\tau} \sum_{i=1}^n p_i(1-q_i)\left[ u_{t+1}(s_i,q_i^*) +  r_t\left(s, a, l_t(s, a, s_i)\right)\right]
\end{align*}
Here $w_t=l_t(s, a, s_{t+1})=l_t(s, a, s_i)$ is from Assumption \ref{assume} (c). By taking maximum over the action $a$,
$${u}_t(s,\tau) = \max_{a}  \widetilde{u}_t(s,\tau,a).$$
Denote the optimal action as $a^*$ (dependent on $s$ and $\tau$). 
$${u}'_t(s,\tau) = \widetilde{u}_t'(s,\tau,a^*).$$
In this way, we have 
$$u_0(s,\tau) = \max_{\pi \in \Pi} \ \text{CVaR}^{\pi}_{\tau} \left[\sum_{t=0}^{T-1}r_t\left(S_t, a_t, w_t\right)\right].$$
\label{valFunc1}
\end{theorem}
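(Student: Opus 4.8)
The plan is to argue by backward induction on $t$, carrying the two coupled value functions simultaneously: $u_t(s,\tau)$, the optimal CVaR, and $u'_t(s,\tau)$, the $\tau$-quantile of the cumulative reward \emph{under the CVaR-optimal policy}. The latter is exactly the threshold that plays the role of $Q_\tau$ in Definition \ref{cvarDef}, which is why it must be tracked jointly with $u_t$. The base case $t=T$ holds since $u_T=u'_T=0$. The core of the inductive step is a one-period decomposition of $\text{CVaR}_\tau(X)$ for the mixture $X=\sum_{i} X_i\,\mathbf{1}\{S_{t+1}=s_i\}$, where $X_i$ is the immediate reward $r_t$ (deterministic given $s,a,s_i$ by Assumption \ref{assume}(c)) plus the continuation reward conditional on landing in $s_i$, and $p_i=\prob(S_{t+1}=s_i\mid S_t=s,a_t=a)$. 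Starting from Definition \ref{cvarDef} and conditioning on $S_{t+1}$, I would write $\text{CVaR}_\tau(X)=\eta+\frac{1}{1-\tau}\sum_i p_i\,\E[(X_i-\eta)^+]$ with $\eta=Q_\tau(X)$.

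The next step is a per-state shortfall identity: for any level $q_i\in[\prob(X_i<\eta),\prob(X_i\le\eta)]$ one has $\E[(X_i-\eta)^+]=(1-q_i)\big(\text{CVaR}_{q_i}(X_i)-\eta\big)$, which I would verify by writing $(1-q_i)\text{CVaR}_{q_i}(X_i)=\int_{q_i}^1 Q_u(X_i)\,du$ and splitting the integral at the possible atom of $X_i$ located at $\eta$. Here the quantile machinery enters: by Lemma \ref{XY} and Theorem \ref{valFunc}, the threshold $\eta=Q_\tau(X)$ equals $\tilde u'_t(s,\tau,a)=OPT(s,\tau,a,u'_{t+1})$, and the OPT optimizer $\mathbf{q}^*$ supplies admissible levels with $\text{CVaR}_{q_i^*}(X_i)=u_{t+1}(s_i,q_i^*)+r_t$. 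Substituting, and using that for $\tau\in(0,1)$ the budget binds at the optimum so that $\sum_i p_i q_i^*=\tau$ and hence $\sum_i p_i(1-q_i^*)=1-\tau$, the two $\eta$-terms cancel and the displayed recursion for $\tilde u_t(s,\tau,a)$ falls out. Taking the maximum over $a$ yields $u_t$ and the maximizing action $a^*$, and setting $u'_t=\tilde u'_t(\cdot,\cdot,a^*)$ records the quantile of the CVaR-optimal policy, which closes the recursion.

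Before the substitution is legitimate I must establish the structural facts about the $\mathbf{q}^*$ produced by the water-filling in Algorithm \ref{algoForOPT}: every non-removed state sits at the common water level, so $Q_{q_i^*}(X_i)=\eta$ and $q_i^*$ lies in the admissible interval, while every removed state has $q_i^*=1$, for which $(1-q_i^*)=0$ and simultaneously $\sup X_i\le\eta$ forces $\E[(X_i-\eta)^+]=0$, so such states drop out consistently. The two-sided bound $\prob(X<\eta)\le\tau\le\prob(X\le\eta)$ (a quantile property, cf. Lemma \ref{quantInq}) guarantees that a tight allocation $\sum_i p_i q_i^*=\tau$ can indeed be realized inside the per-state atom ranges. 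I expect this atom bookkeeping to be the chief technical nuisance, since it is precisely where discreteness of the rewards, and a possible atom of the optimal cumulative-reward distribution at the VaR level $\eta$, must be reconciled with the $\le\tau$ budget.

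Finally, to lift the one-step identity to the global claim $u_0(s,\tau)=\max_{\pi\in\Pi}\text{CVaR}^{\pi}_{\tau}[\sum_{t=0}^{T-1}r_t(S_t,a_t,w_t)]$, I would run the induction in both directions, paralleling Theorems \ref{value} and \ref{policy}. For achievability, the policy that plays $a^*$ and propagates the augmented level by $\tau_{t+1}=q_i^*$ makes the decomposition an equality at every step, so it telescopes down to $u_0$. For the matching upper bound, I would apply the same decomposition to an arbitrary policy, bound each continuation term by the inductive optimality $\text{CVaR}_{q_i}(X_i)\le u_{t+1}(s_i,q_i)+r_t$, and use the Rockafellar--Uryasev variational characterization of CVaR (the optimal threshold is $\eta=Q_\tau$) to argue that the quantile level is the correct sufficient statistic to propagate and that no alternative level assignment can beat $\mathbf{q}^*$. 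I anticipate this upper-bound direction, reconciled with the atom accounting, to be the hardest part of the argument.
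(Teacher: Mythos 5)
Your proposal is correct and follows essentially the same route as the paper's proof, whose entire argument consists of exactly your two pillars --- the next-state mixture decomposition $\text{CVaR}_\tau\left(\sum_i X_iY_i\right) = Q_\tau\left(\sum_i X_iY_i\right) + \frac{1}{1-\tau}\sum_i p_i \E\left[X_i - Q_\tau\left(\sum_i X_iY_i\right)\,\middle|\, Y_i=1\right]^+$ and the identification of that threshold with the value of OPT via Lemma \ref{XY} --- followed by an appeal to ``a similar argument as Theorem \ref{value}.'' The additional machinery you supply (the per-state shortfall identity, the budget-binding and atom bookkeeping, and the explicit achievability/upper-bound induction) is precisely what the paper's four-sentence sketch leaves implicit, so your plan is, if anything, more complete than the printed proof.
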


Theorem \ref{valFunc1} presents a dynamic programming formulation for the CVaR MDP problem. The key observation is that the CVaR definition involves the quantile, and the results developed Section 4 provide useful tools for quantile-related computations. The functions $u_t$ and $u_t'$ in Theorem \ref{valFunc1} represent the optimal CVaR cost-to-go value function and the corresponding quantiles of the cumulative reward. In contrast to QMDP, the formulation here takes the maximum of the CVaR function and updates the corresponding quantile function according to the optimal action. This result provides a finite-horizon solution that complements the infinite-horizon solution in \cite{chow2015risk}.

\subsection{Dynamic Risk Measures}
\label{riskmeasure}
The quantile objective, as a dynamic risk measure, has been criticized for its time-inconsistency \citep{cheridito2009time, iancu2015tight}. In fact, the quantile objective specifies a family of risk measures (functions) parameterized by the quantile level $\tau$ and thus the conventional notion of time-consistency no longer fits. In Theorem \ref{time}, we present a time-consistency result for the quantile risk measure. The result is implied by the dynamic programming results developed in the previous sections. 

\begin{theorem}
Given two real-value Markov chains with a finite state space, $\{X_{t}\}_{t=0}^{T-1}$ and $\{Y_{t}\}_{t=0}^{T-1}$ and a function $r: \mathbb{R} \rightarrow \mathbb{R},$ if
\begin{equation}
    Q_{\tau}\left(\sum_{t=0}^{T-1}r(X_t)\Big |\mathcal{F}_{k}\right) \ge Q_{\tau}\left(\sum_{t=0}^{T-1}r(Y_t)\Big |\mathcal{F}_{k}\right) 
    \label{timeIneq}
\end{equation}
holds for $k=1,...,T-1$ and all $\tau \in (0,1)$, where $\mathcal{F}_{k}$ denotes the $\sigma$-algebra generated by $\{(X_t,Y_t)\}_{t=0}^k$, and if $X_0$ and $Y_0$ have an identical distribution, then we have
$$Q_{\tau}\left(\sum_{t=0}^{T-1}r(X_t)\right) \ge Q_{\tau}\left(\sum_{t=0}^{T-1}r(Y_t)\right).$$
\label{time}
\end{theorem}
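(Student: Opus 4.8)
The plan is to reduce the all-$\tau$ quantile comparison to a statement about cumulative distribution functions and then exploit the fact that, unlike quantiles, CDFs obey the tower property. The first observation I would record is the equivalence between quantile dominance at every level and first-order stochastic dominance (FOSD): for two finitely supported random variables $U$ and $V$, one has $Q_\tau(U)\ge Q_\tau(V)$ for all $\tau\in(0,1)$ if and only if $\prob(U\le z)\le \prob(V\le z)$ for all $z\in\mathbb{R}$. This is immediate from Definition \ref{qt} and Lemma \ref{quantInq} by taking generalized inverses. With this equivalence in hand, hypothesis (\ref{timeIneq}) says precisely that, conditional on $\mathcal{F}_k$, the variable $\sum_{t=0}^{T-1}r(X_t)$ dominates $\sum_{t=0}^{T-1}r(Y_t)$ in the FOSD order, and the conclusion we seek is exactly the same dominance without conditioning.

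Second, I would make the Markov and dynamic-programming structure explicit, which is where the earlier results enter. Treating $\mathcal{F}_k$ as the joint history up to time $k$ and using the Markov property, the conditional quantile splits as
\[
Q_\tau\!\left(\sum_{t=0}^{T-1}r(X_t)\,\Big|\,\mathcal{F}_k\right)=\sum_{t=0}^{k}r(X_t)+Q_\tau\!\left(\sum_{t=k+1}^{T-1}r(X_t)\,\Big|\,X_k\right),
\]
where the prefix is $\mathcal{F}_k$-measurable and the tail term is exactly the value function of a policy-free QMDP. Lemma \ref{XY} and Theorem \ref{valFunc}, specialized to a single trivial action, express this tail quantile as the operator $OPT$ applied to the one-step-ahead conditional quantile function; the structural fact I would extract is that $OPT$ is monotone in its value-function argument, so FOSD orderings are preserved as the recursion is unrolled.

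Third comes the inductive engine that actually removes the conditioning. Here I would exploit that FOSD is characterized by an inequality between CDFs, i.e.\ between expectations of indicator functions, and expectations \emph{do} satisfy the tower property. Concretely, the conditional FOSD statement $\prob(\sum r(X_t)\le z\mid\mathcal{F}_k)\le \prob(\sum r(Y_t)\le z\mid\mathcal{F}_k)$ holds pointwise in $z$ almost surely; taking expectations integrates out the conditioning with identical mixing weights on both sides and yields the unconditional inequality $\prob(\sum r(X_t)\le z)\le \prob(\sum r(Y_t)\le z)$. I would run this as a backward induction over $k=T-1,\dots,0$, at each stage passing from the ordering at level $k+1$ to level $k$ by one such mixing step; the hypothesis that $X_0\stackrel{d}{=}Y_0$ is what aligns the mixing weights at the final unconditioning step and closes the induction. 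Converting the resulting unconditional FOSD back to quantiles via the first step gives the claim.

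The main obstacle, and the reason the result is not a triviality, is precisely that quantiles do not compose: $Q_\tau$ has no tower property, which is the analytic source of the time-inconsistency discussed in \textsection\ref{riskmeasure}, so one cannot simply iterate the conditional quantile inequality. The entire argument hinges on upgrading the quantile hypothesis---valid for every $\tau$ simultaneously---to an FOSD statement, because FOSD is the strongest ordering that is both equivalent to all-$\tau$ quantile dominance and stable under mixing. The delicate bookkeeping lies in the mixing step: the two chains accumulate different prefixes $\sum_{t\le k}r(X_t)$ and $\sum_{t\le k}r(Y_t)$ and are conditioned on their own histories, so I must verify that the law of total probability is applied with matching weights---this is exactly where the common initial distribution and the Markov structure guarantee coherence.
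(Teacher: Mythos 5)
Your proof is correct, but it takes a genuinely different route from the paper's. The paper stays entirely in quantile language: it defines the conditional quantile functions $g_x(s,\tau)=Q_\tau\bigl(\sum_{t}r(X_t)\mid X_0=s\bigr)$ and $g_y(s,\tau)=Q_\tau\bigl(\sum_{t}r(Y_t)\mid Y_0=s\bigr)$, observes that hypothesis (\ref{timeIneq}) gives $g_x\ge g_y$ pointwise, then invokes Lemma \ref{XY} to represent each \emph{unconditional} quantile as the value of the optimization problem (\ref{optGeneral}) with input function $g_x$ (resp.\ $g_y$) and mixture weights $p_i=\prob(X_0=s_i)=\prob(Y_0=s_i)$ --- this is where the identical initial distributions enter --- and concludes by monotonicity of OPT in its input. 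You instead upgrade the all-$\tau$ quantile hypothesis to first-order stochastic dominance of the conditional laws, remove the conditioning by taking expectations of the CDF inequality (CDFs, unlike quantiles, mix linearly under the tower property), and convert back to quantiles at the end. Your route is more elementary --- it needs neither Lemma \ref{XY} nor the OPT operator --- and it isolates the real mechanism: FOSD is precisely the ordering that is equivalent to all-$\tau$ quantile dominance \emph{and} closed under mixing, which also makes transparent why the hypothesis must hold for every $\tau$ simultaneously. What the paper's argument buys is economy within its own framework: the same monotonicity of the dynamic-programming operator that drives Theorems \ref{valFunc} and \ref{value} delivers the consistency result. Two small blemishes in your write-up, neither fatal: the middle paragraph (prefix-plus-tail decomposition through OPT) is dispensable, and under the paper's literal definition $\mathcal{F}_k=\sigma(X_0,Y_0)$ the prefix $\sum_{t\le k}r(X_t)$ is not $\mathcal{F}_k$-measurable, so that display should simply be dropped; and your backward induction collapses to a single unconditioning step --- indeed, under joint conditioning on $(X_0,Y_0)$ both conditional CDFs are integrated against the same law, so the weights match automatically, and the identical-distribution assumption is only genuinely needed if, as in the paper, each chain is conditioned on its own initial state.
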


$\{X_{t}\}_{t=0}^T$ and $\{Y_{t}\}_{t=0}^T$ can be interpreted as two Markov chains specified by an MDP with two fixed policies. Condition (\ref{timeIneq}) in the above theorem can be viewed as parallel to the dynamic risk measure in \cite{cheridito2009time} and \cite{iancu2015tight}, but it is a stronger condition because here the inequality is required to hold for all $\tau \in [0,1]$. We can see that this is entailed in the dynamic programming procedure for solving QMDP, where the quantile value at time $t$ depends on the quantile values at time $t+1$ for all $\tau\in (0,1)$ in general. This explains why the quantile objective, as a risk measure, is not time-consistent in the conventional sense, where we only require that inequality (\ref{timeIneq}) holds for a fixed $\tau$. It emphasizes that the optimization of a dynamic quantile risk measure requires changing the risk measure parameter (the quantile level $\tau_t$ in Theorem \ref{policy}) itself over time. With the stronger condition (\ref{timeIneq}), the quantile objective can also be viewed as a time-consistent risk measure. 

The changing of the risk measure parameter can be seen in the example in Section \ref{illustration}. We know that the risk parameter $\tau_t$ in the QMDP model changes over time according to both $\tau_{t-1}$ and the reward outcome $r_{t-1}$. In the example, if $\tau_0=\tau=0.4$ and $r_1=50,$ then $\tau_1=0.3$. A risk-averse decision maker will become more risk-averse if a good return is achieved in the first time period under the quantile objective. The dynamic changing of risk parameter cannot be captured by conventional time-consistent risk measures such as the nested risk measure that require the same risk parameter over the entire time horizon. In this way, the quantile objective enriches the family of time-consistent risk measures.

\subsection{Infinite-Horizon QMDP}
\label{InfMDP}

For the infinite-horizon QMDP, the objective is 
\begin{equation*}
\max_{\pi \in \Pi} Q^{\pi}_{\tau} \left[\sum_{t=0}^{\infty} \gamma^t r_t\left(S_t, a_t, w_t\right)\right].
\end{equation*}
Here $r_t = r(S_t,a_t, w_t)$ is stationary and $\gamma \in (0,1)$ is the discount factor. The policy $\pi = \{\mu_t\}_{t=0}^\infty$ consists of a sequence of decision functions and $\mu_t$ maps the historical information $h_t = (S_0,a_0,...,S_{t-1},a_{t-1},S_t)$ to an admissible action $a_t \in \mathcal{A}_t \subset \mathcal{A}.$ The value function is
\begin{equation}
v(s,\tau) \triangleq \max_{\pi \in \Pi} Q^{\pi}_{\tau} \left[\sum_{t=0}^{\infty} \gamma^t r_t\left(S_t, a_t, w_t\right) \Big | S_0 = s\right].
\label{valFuncInf}
\end{equation}

Similar to the infinite-horizon MDP, we propose a value iteration procedure to compute the QMDP value function.  The result is formally stated as Theorem \ref{valueInf}. We use $k$ to denote the iteration number here to distinguish it from the index notation $t$ in Theorem \ref{value} which is the time stamp for backward dynamic programming.
\begin{theorem}[Infinite-Horizon Optimal Value Function] Consider the following value iteration procedure:
$$v^{(0)}(s,\tau) = 0,$$
$$\widetilde{v}^{(k+1)}(s,\tau,a)  =  OPT(s,  \tau, a, \gamma  v^{(k)}(\cdot,\cdot) ),$$
  $$ v^{(k+1)}(s,\tau) = \ \max_a \  \widetilde{v}^{(k+1)}(s,\tau,a).$$
Then we have 
$$\lim_{k\rightarrow \infty} v^{(k)}(s,\tau) = v(s,\tau),$$
for any $s\in \mathcal{S}$ and $\tau\in [0,1].$ Furthermore, since the function $v(s,\tau)$ is a monotonic function for $\tau$, the convergence is uniform.
\label{valueInf}
\end{theorem}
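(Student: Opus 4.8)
The plan is to recognize the value iteration as repeated application of a Bellman-type operator and to prove that this operator is a $\gamma$-contraction in the supremum norm. Define $\mathcal{T}$ on bounded functions $v:\mathcal{S}\times[0,1]\to\mathbb{R}$ by
$$\mathcal{T}[v](s,\tau) \ \triangleq\ \max_{a} OPT\big(s,\tau,a,\gamma v(\cdot,\cdot)\big),$$
so that the iteration in the theorem is exactly $v^{(k+1)}=\mathcal{T}[v^{(k)}]$. Because the state and action spaces are finite and $w_t$ has finite support (Assumption \ref{assume}), every reward $r_t$ is bounded by some $R$, so each $v^{(k)}$ is bounded and $\mathcal{T}$ maps the complete metric space of bounded functions on $\mathcal{S}\times[0,1]$ (under $\|\cdot\|_\infty$) into itself. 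The goal is then to show $\mathcal{T}$ is a contraction, invoke the Banach fixed point theorem, and identify its unique fixed point with the value function $v(s,\tau)$ of (\ref{valFuncInf}).

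First I would prove the bound $\|\mathcal{T}[v]-\mathcal{T}[w]\|_\infty \le \gamma\|v-w\|_\infty$. Fixing $s,\tau,a$ and a feasible vector $\mathbf{q}$, the active index set $\{i:q_i\neq 1\}$ in the inner minimization of (\ref{optGeneral}) depends only on $\mathbf{q}$, not on the value function inserted into $OPT$. Hence, for that fixed $\mathbf{q}$,
$$\Big|\min_{i:q_i\neq 1}\big[\gamma v(s_i,q_i)+r\big]-\min_{i:q_i\neq 1}\big[\gamma w(s_i,q_i)+r\big]\Big|\ \le\ \gamma\max_i|v(s_i,q_i)-w(s_i,q_i)|\ \le\ \gamma\|v-w\|_\infty,$$
where $r$ is the common additive reward term that cancels and $\min(\cdot)$ is nonexpansive. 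Taking the maximum over $\mathbf{q}$ and then over $a$ are both nonexpansive operations, which yields the claimed bound. By the Banach fixed point theorem, $\mathcal{T}$ has a unique fixed point $v^*$ and $\|v^{(k)}-v^*\|_\infty\le\gamma^k\|v^{(0)}-v^*\|_\infty\to 0$. Note that this convergence is already uniform in $(s,\tau)$, which is consistent with, and in fact stronger than, the monotonicity remark: the $\tau$-monotonicity of each $v^{(k)}(s,\cdot)$ (Lemma \ref{quantInq}) and of the limit would alternatively upgrade pointwise to uniform convergence via a Dini-type argument.

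It remains to identify $v^*$ with the infinite-horizon value $v$. Here I would reuse the finite-horizon machinery: by Theorem \ref{value}, applied with the discount factor incorporated into the backward recursion, $v^{(k)}(s,\tau)$ equals the optimal $\tau$-quantile of the truncated discounted reward $R_k\triangleq\sum_{t=0}^{k-1}\gamma^t r_t$ started from $s$. Writing $R_\infty\triangleq\sum_{t=0}^{\infty}\gamma^t r_t$, the tail is bounded almost surely by $|R_\infty-R_k|\le \gamma^k R/(1-\gamma)$; since quantiles are translation-covariant and monotone, they are $1$-Lipschitz under such a uniform almost-sure perturbation, so $|Q_\tau^\pi(R_\infty)-Q_\tau^\pi(R_k)|\le \gamma^k R/(1-\gamma)$ for every policy $\pi$. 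Taking the supremum over policies on both sides, while extending any $k$-horizon policy arbitrarily to an infinite-horizon one and truncating any infinite-horizon policy to its first $k$ decisions, gives $\sup_{s,\tau}|v^{(k)}(s,\tau)-v(s,\tau)|\le \gamma^k R/(1-\gamma)\to 0$. Hence $v^{(k)}\to v$ uniformly, and by uniqueness of the fixed point $v^*=v$, completing the argument.

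The main obstacle I anticipate is the contraction step, specifically ensuring that the index-exclusion $\{i:q_i\neq 1\}$ in $OPT$ does not spoil nonexpansiveness; the key realization that this set is a function of the decision vector $\mathbf{q}$ alone, and so is identical for $v$ and $w$ under the same $\mathbf{q}$, is what lets the inner minimization behave like an ordinary nonexpansive $\min$. A secondary point requiring care is justifying the exchange of the supremum over policies with the truncation inequality when identifying the fixed point, which relies on the alignment between finite- and infinite-horizon policies described above.
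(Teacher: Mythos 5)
Your proposal is correct, and its core contraction step coincides with the paper's: the paper defines the same operator $\mathcal{L}v=\max_a OPT(s,\tau,a,\gamma v(\cdot,\cdot))$ and proves $\|\mathcal{L}v_1-\mathcal{L}v_2\|_\infty\le\gamma\|v_1-v_2\|_\infty$ by evaluating both OPTs at the optimizer $\mathbf{q}^*$ of the first one -- which is just the standard way of exploiting exactly the observation you make, that the index set $\{i: q_i\neq 1\}$ and the additive reward terms are common to both once $\mathbf{q}$ is fixed. Where you genuinely diverge is in identifying the limit with the infinite-horizon value $v$. The paper does this by asserting that $v$ itself is a fixed point of $\mathcal{L}$, ``regarding'' the infinite discounted sum as a truncated sum and reusing the finite-horizon argument of Theorem \ref{value}; this is terse but has the benefit of directly establishing the Bellman (fixed-point) equation for $v$, which the paper then leans on for the infinite-horizon optimal policy (Theorem \ref{policyInf}). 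You instead identify each iterate $v^{(k)}$ with the optimal quantile of the $k$-horizon truncated discounted reward (via Theorem \ref{value} plus the rescaling that turns time-dependent discounting $\gamma^t r$ into repeated application of $\mathcal{T}$ with $\gamma v$ inside -- a step you gloss but which goes through by positive homogeneity of max/min), and then bound $|v^{(k)}-v|$ directly using the almost-sure tail bound $\gamma^k R/(1-\gamma)$ together with translation-equivariance and a.s.-monotonicity of quantiles, with the policy truncation/extension argument handling the supremum over policies. This buys you an explicit, uniform error bound $\gamma^k R/(1-\gamma)$, makes rigorous precisely the step the paper leaves informal, and in fact proves the theorem's conclusion without needing the contraction at all (the fixed-point property of $v$ then falls out as a corollary by continuity of $\mathcal{T}$). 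You are also right that sup-norm convergence already gives uniformity, which is cleaner than the paper's monotonicity remark: a Dini/P\'olya-type argument would require continuity of the limit in $\tau$, which these piecewise-constant value functions need not have.
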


\noindent The key to the proof of the theorem is to show that the OPT procedure, as an operator, features the same contractive mapping property as the Bellman operator in a traditional MDP. The contraction rate is simply the discount factor $\gamma.$ Based on the optimal value function, we have the following result characterizing the optimal policy. 

\begin{theorem}[Infinite-Horizon Optimal Policy] Let $v(\cdot, \cdot)$ be the optimal value function as in Theorem \ref{valueInf} and
$$\widetilde{v}(s,\tau,a) \triangleq OPT(s,  \tau, a, \gamma  v(\cdot,\cdot)).$$
We augment the state $S_t$ with a quantile $\tau_t$ to assist the execution of the optimal policy. At the initial state $s_0$ and $\tau_0 = \tau,$ we define our initial policy function as
$$\mu_0(s_0,\tau_0) = \operatorname*{arg\,max}_a \  \widetilde{v}(s_0,\tau_0, a).$$
At time stamp $t,$ we execute $\pi_t$ and then arrive at state $S_{t+1}$. Let $\mathbf{q}^*$ be the solution to the optimization problem  $OPT\left(S_t, \mu_t(S_t,\tau_t), \tau_t, \gamma v(\cdot,\cdot)\right)$. The term $\tau_{t+1}$ is defined as 
$$\tau_{t+1} = q^*_i,$$
for the specific $i$ such that $S_{t+1}=s_i,$ and $\mu_{t+1}$ is defined as 
$$\mu_{t+1}(S_{t+1},\tau_{t+1}) = \operatorname*{arg\,max}_a  \  \widetilde{v}(S_{t+1},\tau_{t+1}, a).$$
The policy $\pi=\{\mu_{t}\}_{t=0}^\infty$ is the optimal policy for the objective (\ref{valFuncInf}) and obtains the optimal value $v(s_0,\tau_0)$.
\label{policyInf}
\end{theorem}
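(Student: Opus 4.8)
The plan is to prove the two inequalities $V^{\pi}(s,\tau)\le v(s,\tau)$ and $V^{\pi}(s,\tau)\ge v(s,\tau)$, where $V^{\pi}(s,\tau)$ denotes the $\tau$-quantile of the discounted reward-to-go $\sum_{k\ge 0}\gamma^{k}r_{k}$ realized by the constructed stationary augmented-state policy $\pi$ started from state $s$ with augmented level $\tau$. By stationarity of the rewards, transitions, and the policy (which depends on the history only through the pair $(S_t,\tau_t)$), $V^{\pi}$ is well defined and time-homogeneous. Since $\tau_t$ is a deterministic function of the history $h_t$, the policy $\pi$ lies in the admissible class $\Pi$, so the upper bound $V^{\pi}(s,\tau)\le v(s,\tau)$ is immediate from the definition (\ref{valFuncInf}) of $v$ as the supremum over $\Pi$. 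The substance of the proof is therefore the achievability (lower) bound.

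For the lower bound I would first record the Bellman optimality equation supplied by Theorem \ref{valueInf}: since $v$ is the unique fixed point of the contraction $v\mapsto\max_a OPT(s,\tau,a,\gamma v(\cdot,\cdot))$, we have $v(s,\tau)=OPT(s,\mu(s,\tau),\tau,\gamma v(\cdot,\cdot))=\min_{i:q^{*}_i\neq 1}\bigl[r_t(s,a,s_i)+\gamma v(s_i,q^{*}_i)\bigr]$, evaluated at the OPT optimizer $\mathbf{q}^{*}$ and the maximizing action $a=\mu(s,\tau)$. Next I would derive a matching one-step relation for $V^{\pi}$ via Lemma \ref{XY}. Writing $X_i$ for the reward-to-go conditioned on the first transition landing in $s_i$ and $Y_i=1\{S_{t+1}=s_i\}$, stationarity and the affine-invariance of quantiles give that the realized conditional $q^{*}_i$-quantile of $X_i$ equals $r_t(s,a,s_i)+\gamma V^{\pi}(s_i,q^{*}_i)$, because after reaching $s_i$ the policy continues from the fresh augmented state $(s_i,q^{*}_i)$. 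Since $\mathbf{q}^{*}$ is feasible for the program in Lemma \ref{XY} (it satisfies $\sum_i p_iq^{*}_i\le\tau$), selecting it inside the maximization yields $V^{\pi}(s,\tau)\ge\min_{i:q^{*}_i\neq 1}\bigl[r_t(s,a,s_i)+\gamma V^{\pi}(s_i,q^{*}_i)\bigr]$, with the minimum over the same index set as in the Bellman equation.

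Subtracting the two relations and using the elementary inequality $\min_i a_i-\min_i b_i\ge\min_i(a_i-b_i)$ over this common finite index set gives $V^{\pi}(s,\tau)-v(s,\tau)\ge\gamma\,\min_{i}\bigl[V^{\pi}(s_i,q^{*}_i)-v(s_i,q^{*}_i)\bigr]\ge\gamma\,\delta$, where $\delta\triangleq\inf_{s',\tau'}\bigl[V^{\pi}(s',\tau')-v(s',\tau')\bigr]$. Under Assumption \ref{assume} the rewards are bounded, so both $V^{\pi}$ and $v$ lie in a bounded interval and $\delta$ is finite. Taking the infimum over $(s,\tau)$ on the left then gives $\delta\ge\gamma\delta$, hence $(1-\gamma)\delta\ge 0$ and $\delta\ge 0$ because $\gamma<1$. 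This is exactly $V^{\pi}\ge v$, which together with the upper bound yields $V^{\pi}=v$; in particular the policy attains $v(s_0,\tau_0)$, and the update $\tau_{t+1}=q^{*}_i$ is what makes this decomposition propagate, completing the proof.

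The hard part will be the rigorous justification of the one-step quantile decomposition for $V^{\pi}$, that is, that the realized conditional $q^{*}_i$-quantile of the reward-to-go equals $r_t(s,a,s_i)+\gamma V^{\pi}(s_i,q^{*}_i)$. This hinges on (i) the stationarity of the augmented-state process, so that the continuation from $(s_i,q^{*}_i)$ is distributionally a fresh copy of the process, and (ii) the well-definedness of $\tau_{t+1}$ when the OPT optimizer $\mathbf{q}^{*}$ is not unique; as in Theorem \ref{policy}, one checks that any measurable selection of $\mathbf{q}^{*}$ produces the same value, so $\pi$ is well defined and optimal regardless of tie-breaking. A secondary point is ensuring $V^{\pi}-v$ is bounded so that the contraction inequality delivers $\delta\ge 0$ rather than a vacuous statement; boundedness of rewards and $\gamma<1$ supply this. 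As an alternative to the contraction argument, one could truncate the horizon at $N$, invoke the finite-horizon optimality of Theorem \ref{policy} together with the convergence $v^{(N)}\to v$ from Theorem \ref{valueInf}, and control the discarded tail by $\gamma^{N}R/(1-\gamma)\to 0$; the contraction route above is cleaner because it avoids interchanging the quantile operator with the infinite-horizon limit.
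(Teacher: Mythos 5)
Your proposal is correct, but it takes a genuinely different route from the paper. The paper's own proof is three sentences long: it declares the argument ``exactly the same as in the finite-horizon case,'' i.e., it reuses the backward-induction/achievability reasoning of Theorem \ref{policy} (which rests on the construction in the proof of Theorem \ref{valFunc}), simply \emph{assuming} the existence of an achieving policy from time $t=1$ onward. In the infinite-horizon setting that induction has no base case, so the paper's argument is really a one-step unfolding plus an unproved achievability hypothesis. Your contraction-on-the-gap argument supplies exactly what is missing: admissibility of the augmented-state policy gives $V^{\pi}\le v$; the fixed-point identity of Theorem \ref{valueInf}, evaluated at the maximizing action and at the optimizer $\mathbf{q}^{*}$, together with the feasibility half of Lemma \ref{XY} and translation/scaling equivariance of quantiles, gives the one-step inequality $V^{\pi}(s,\tau)\ge\min_{i:q_i^{*}\neq 1}\bigl[r(s,a,s_i)+\gamma V^{\pi}(s_i,q_i^{*})\bigr]$ over the same index set; and the min-subtraction inequality turns the difference of the two relations into $\delta\ge\gamma\delta$ for the bounded gap $\delta=\inf_{s',\tau'}\bigl[V^{\pi}(s',\tau')-v(s',\tau')\bigr]$, hence $\delta\ge 0$. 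This is rigorous where the paper is loose, at the cost of length; the paper's approach buys only brevity and uniformity with the finite-horizon treatment.

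One caveat you should state explicitly: Lemma \ref{XY} is formulated and proved for finitely supported $X_i$, whereas the infinite-horizon reward-to-go generally has infinite support. You invoke only the feasibility (lower-bound) direction of the lemma, whose proof does not use finite support, but that needs to be said, since the equality direction does use it. Your fallback truncation argument (finite-horizon optimality plus the tail bound $\gamma^{N}R/(1-\gamma)\to 0$) sidesteps this issue entirely and is closer in spirit to how the paper itself passes from the finite to the infinite horizon.
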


The value iteration procedure is similar to the backward dynamic programming procedure for the finite-horizon case. This is because we can always interpret the finite-horizon reward as an approximation of the infinite-horizon reward by truncating the reward after time $T$. It is worth noting that CVaR does not break the value iteration aspect of the infinite-horizon case. Therefore the results in the previous subsection for the infinite-horizon counterpart of CVaR MDP also hold, and the value iteration procedure provides an alternative solution to the infinite-horizon CVaR MDP problem \citep{chow2015risk}.

\section{Empirical Results}
We present two sets of empirical results, evaluating our model on both a synthetic example and a problem of HIV treatment initiation.

\subsection{Synthetic Experiment}

\subsubsection{Overview}
We construct a synthetic example and perform simulations to illustrate the computational complexity of QMDP as a function of the state size, time horizon, and reward structure with comparison to MDP and two risk-sensitive MDP models. We also demonstrate how the QMDP model can be applied for risk assessment of an MDP.

\subsubsection{Model Formulation}
In this example, a player moves along a chain and receives rewards dependent on his location. Figure \ref{fig:simpleMDP} illustrates the model for this game. The arrows represent the possible movements. At each time step (s)he takes the action to stay or to move. If the player chooses to move, (s)he will move randomly to a neighboring state. The goal is to maximize expected cumulative reward over the time horizon. 
 
\begin{figure}[ht!]
\centering
\includegraphics[height=1in]{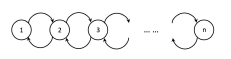}
\caption{Illustration of the simple QMDP model.}
\centering
\label{fig:simpleMDP}
\end{figure}

We formulate the model in the language of MDP as follows:
\begin{itemize}
\item \textbf{\textit{Time Horizon}}: We assume there are $T$ decision periods.

\item \textbf{\textit{State}}: We denote the state by $S_t,$ $t=0,..,T$ where  $S_t \in \mathcal{S} = \{1,...,n\}$.

\item \textbf{\textit{Action}}: At each time $t$, the player takes an action $a_t\in \mathcal{A} = \{\text{Stay, Move}\}.$

\item \textbf{\textit{Transition Probability}}: 
\begin{itemize}
\item When $a_t=\text{Stay},$ the player will stay at his location with probability $1$.
\item When $a_t = \text{Move},$ the player will move  randomly to one of its neighbors. When the player starts from the ends of the chain, then (s)he moves to her/his single neighbor with probability 1.
\end{itemize}
\item \textbf{\textit{Rewards}}: When the player stays in state $i$ at the beginning of a time period, (s)he receives a reward $R_i.$
\end{itemize}

\subsubsection{Results}
We ran $10^5$ simulation trials solving QMDP and MDP (on a laptop with a 2.8 GHz Intel Core i7). In each simulation trial, the transition probabilities were randomly generated and the rewards were randomly sampled integers no greater than $R_{max}.$ Figure \ref{fig:TimeComplexity} shows the average computation time as a function of the time horizon $T$, the number of states $n$, and the maximum reward amount $R_{max}$. The CPU time for solving QMDP is quadratic in the time horizon $T$ and linear in the state size $n$, and grows linearly with maximum reward amount and then fluctuates after $R_{max}$ reaches a certain level. This does not contradict the complexity analysis in Proposition \ref{compl}; the quadratic complexity in $R_{max}$ is an upper bound but is not necessarily tight for every trial. Figure \ref{fig:TimeComplexity} shows that, as expected, MDP is more time efficient than QMDP since QMDP records the full distribution of cumulative reward at each step of dynamic programming whereas MDP only records the mean value.

\begin{figure}[ht!]
\centering
\includegraphics[height=2.5 in]{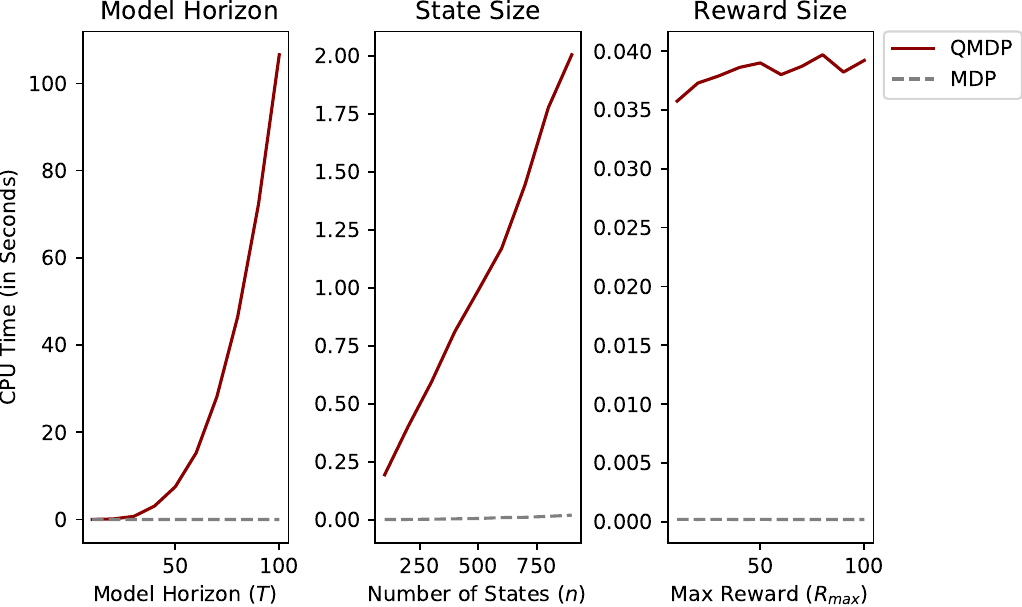}
 \caption{\small{Synthetic example: CPU time of QMDP and MDP. Base model parameters: time horizon $T$ = 10, state space size $n$ = 20, max reward $R_{max}$ = 10. For each experiment, we changed a single parameter and monitored the running time. The dark red solid lines indicate the CPU time for execution of the QMDP algorithm and dashed gray lines indicate the CPU time for execution of the MDP algorithm.}}
\label{fig:TimeComplexity}
\end{figure}

\begin{figure}[ht!]
\centering
\includegraphics[height=2.5 in, width = 6 in]{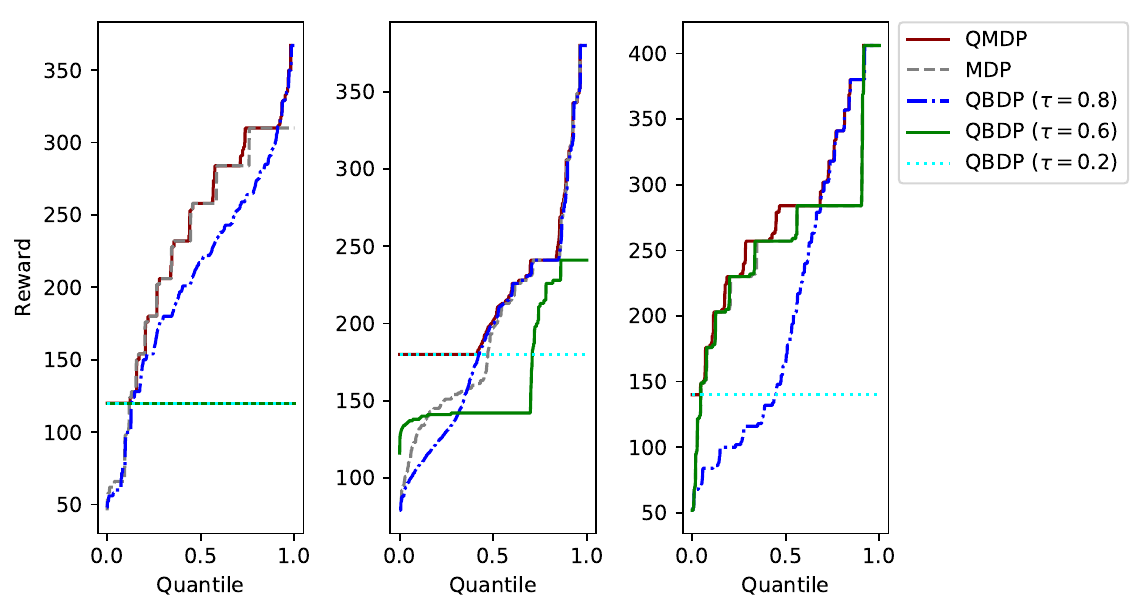}
\caption{\small{Synthetic example: QMDP value function comparison with MDP and QBDP. Each plot is obtained from a different initialization of model parameters. The gray dashed lines are the cumulative density function for simulations with the execution of the optimal MDP policy. The red lines are the optimal quantile rewards computed via QMDP. The remaining lines are the cumulative density functions obtained by simulating the optimal policies from QBDP with different preset values of $\tau$.}}
\label{fig:QBDPReward}
\end{figure}

\begin{figure}[ht!]
\centering
\includegraphics[height=2.5 in, width = 6 in]{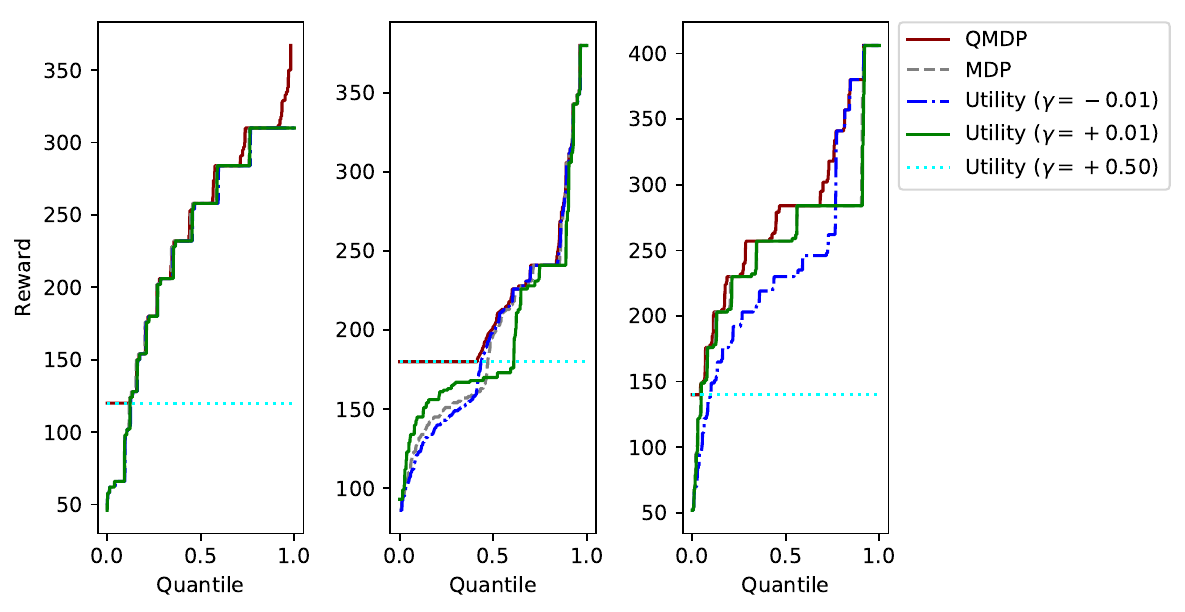}
\caption{\small{Synthetic example: QMDP value function comparison with MDP and utility function-based MDP. Each plot is obtained from a different initialization of the model parameters. The gray dashed lines are the cumulative density function for simulations with the execution of the optimal MDP policy. The red lines are the optimal quantile rewards computed via QMDP. The remaining lines are the cumulative density functions obtained by simulating the optimal policies from the utility function-based approach with different preset values of $\gamma$.}}
\label{fig:UtilityReward}
\end{figure}

In addition, we implemented two other risk-sensitive MDP models. We applied the nested composition of a one-step risk measure \citep{jiang2017risk, ruszczynski2010risk}, which aims to solve the following objective with $\rho_{\tau}$ as a quantile operator for corresponding value at specified $\tau$:
\begin{equation}
\max_{\pi \in \Pi} \rho_{\tau} [r_1 +\rho_{\tau} (r_2 + \dots +\rho_{\tau} (r_T))].
\label{quantileMDP2}
\end{equation}
We will refer to this as quantile-based dynamic programming (QBDP). We also implemented a utility function-based approach \citep{howard1972risk, chow2017} which solves the MDP with exponential utility function  $u(v) = - \frac{v}{|v|}e^{-\gamma v} $. The parameter $\gamma$ indicates the risk attitude of the decision maker: a negative value of $\gamma$ indicates that the decision maker is risk-seeking, whereas a positive $\gamma$ means the decision maker is risk-averse.

Figures \ref{fig:QBDPReward} and \ref{fig:UtilityReward} compare the outcome of QMDP with MDP and with QBDP and the utility function-based MDP, respectively. We generated three random simulation trials corresponding to the three panels in each figure. To obtain the CDF for the cumulative reward under $\pi^*$, we simulated $20,000$ instances and plotted the empirical histogram as the gray dashed line. We plotted the best quantile reward obtained from QMDP as the red line. A point $(q,r_0)$ on the gray dashed line means that the policy $\pi^*$ can achieve at least $r_0$ cumulative reward with probability $1-q.$ A point $(q,r_1)$ on the red line means that the optimal $q$-quantile reward is $r_1,$ i.e., there exists a policy that can achieve at least $r_1$ cumulative reward with probability $1-q.$ For the QBDP and utility function-based approaches, we solved the problem by setting various values for the preset parameters ($\tau$ in QBDP and $\gamma$ in the utility function approach) and then simulating $20,000$ instances to obtain the CDF of cumulative reward associated with the corresponding policies.

The QMDP value function tells the optimally achievable quantile values for all quantiles. In contrast, the optimal MDP value function only concerns the expectation, and the optimal QBDP value function has no explicit connection to the cumulative reward. To interpret the corresponding policy in a traditional MDP or QBDP, we need to run simulations and plot the histogram of the cumulative reward. The computational cost of this simulation procedure may offset the computational advantage of such models. In Appendix C we compare the computation time of QMDP versus QBDP for the synthetic example. Moreover, most risk-sensitive MDP models, like QBDP, only provide a glimpse of the inherent risk by solving the MDP problem with a single risk parameter. A procedure to trade off the risk and reward is then needed to select a proper risk parameter. QMDP simplifies the procedure by providing the information for all quantiles at once.

%As noted earlier, the QMDP results can serve as a risk-assessment tool for MDPs, helping to determine whether and when we should adopt a risk-sensitive MDP model. 

\subsubsection{MDP Risk Assessment}
In Figures \ref{fig:QBDPReward} and \ref{fig:UtilityReward}, we observe that the red curve, by definition of the QMDP, is never below the gray curve for any quantile. The gap between the curves indicates the space for improvement of QMDP over MDP for any given quantile, and thus helps us understand the inherent risk associated with the optimal MDP policy. Specifically, 

\begin{itemize}
    \item For the example on the left in both figures, the only significant difference occurs at the lowest quantiles. In this case the policy $\pi^*$, although not necessarily achieving quantile optimality, is quite stable and robust.
    \item For the example in the middle in both figures, if a quantile reward is desired, there is an opportunity for significant improvement for quantiles below the median. In this case, a risk-sensitive MDP model might be desirable for a risk-averse decision maker.
    \item For the example on the right in both figures, small differences occur throughout, indicating that if a quantile reward is desired, QMDP can achieve a somewhat better solution. In general, if the gap between the two curves is not significant, the traditional MDP should be used since it guarantees the optimal expected return in addition to achieving a near-optimal quantile reward. 
\end{itemize}   

% I DELETED THIS PARAGRAPH BECAUSE IT SEEMS VERY REPETITIVE The assessment can reveal the inherent risk of an MDP and answer the question of whether QMDP or another risk-sensitive MDP model should be applied. The QBDP and utility function-based models can also provide policies that achieve optimal quantile levels, but this type of information must be obtained from numerous simulations. QMDP can be used together with and complementary to existing risk-sensitive MDP models.

%For example, the cyan dashed lines for $\tau = 0.2$ in Figure \ref{fig:QBDPReward} or $\gamma = 0.5$ in Figure \ref{fig:UtilityReward} find the most risk-averse policies that meet the minimum of optimal quantile values from QMDP, and the blue dashed lines in both figures find policies that have higher return in upper quantile regions. 

\begin{figure}[t]
\centering
\includegraphics[height=1.85 in]{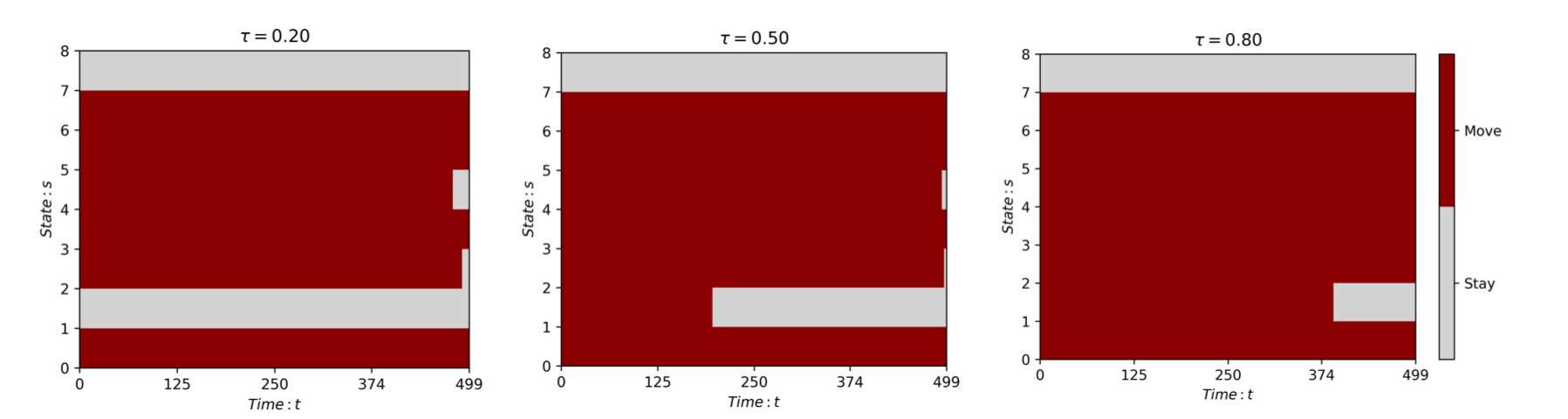}
\caption{Synthetic example: Optimal QMDP actions at different states ($s$) and different time periods ($t$) with different $\tau$ values.}
\centering
\label{fig:simple_MDP_policy}
\end{figure}

QMDP also provides risk interpretation of each state. Figure \ref{fig:simple_MDP_policy} shows the optimal QMDP actions for an instance of the game shown in Figure \ref{fig:simpleMDP}. In this problem instance, $T=500$, $|\mathcal{S}|=8$, and the reward vector is $(1, 10, 2, 0, 7, 9, 12, 18).$ We considered $\tau=0.2,$ $0.5,$ and $0.8$. The color of the point at location $(t,s)$ denotes the optimal action when the state at time $t$ is $s$. Note that the optimal actions under all quantiles are obtained as the output of QMDP in one shot, as a byproduct of the optimal value function. The highest reward, $18$, is obtained in state $8$ so, intuitively, the optimal decision is to move until $s=8$. However, state $2$ also provides a good reward of $10,$ so the player may want to stay in state $2$ if there are not many remaining time periods because it takes certain amount of time to traverse from state $2$ to state $8$ and less reward can be collected during the process. The decision of whether to move from state $2$ also depends on the player's risk attitude: a risk-averse player ($\tau = 0.2$) would choose to stay while a risk-seeking player would choose to move. The optimal action plot in Figure \ref{fig:simple_MDP_policy} provides an understanding of this state-level inherent risk and provides guidance for balancing the risk and reward for different risk attitudes and time horizons.

\subsection{Case Study: HIV Treatment Initiation }
\subsubsection{Background}
An estimated 37 million people worldwide are living with HIV \citep{WHO_HIV}.  Effective antiretroviral therapy (ART) reduces HIV-associated morbidity and  mortality for treated individuals \citep{tanser2013high} and has transformed HIV into a chronic disease. However, there is debate around the optimal time to initiate ART because of potential long-term side effects such as increased cardiac risk \citep{Freiberge003411}. Patients who delay initiating ART may sacrifice immediate immunological benefits but avoid future side effects. 

\cite{Diana2011} constructed a  sequential decision model to determine the ART initiation time for individual patients that maximizes expected quality-adjusted life expectancy, taking into account the potential for long-term side effects of ART (increased cardiac risk). However, the MDP model used in the analysis cannot capture patients' risk attitudes, which may affect their preferences regarding treatment \citep{Fraenkel}. QMDP bridges the gap between the traditional MDP and the patient's risk attitude by allowing for different values of the quantile threshold in the QMDP objective to incorporate the risk preferences. 

\cite{Diana2011} and many other MDP healthcare applications have considered the impact of parameter uncertainty on the optimal policy. Some of these efforts can be characterized as robust MDP frameworks \citep{Nilim2005,doi:10.1287/moor.1120.0566, Zhang2015RobustMD}. QMDP differs in that it provides a unique perspective on the inherent risk of the original MDP. The analysis explicitly reveals the uncertainty of the cumulative reward and allows the analyst to determine whether a risk-sensitive MDP framework should be used.

\subsubsection{Model Formulation}
The QMDP formulation of the optimal ART initiation time problem is straightforward and is similar to the MDP formulation. 

\begin{itemize}
\item \textbf{\textit{Time Horizon}}: We assume the patient is assessed at each time period $t \in \{0, 1, 2, \cdots, T\}$.\
\item \textbf{\textit{State}}: We characterize the state of the patient at time $t$ as $S_t = (c_t, y_t, d_t)$. The state is a function of the patient's CD4 cell count $(c_t)$ (which is a measure of the current strength of the patient's immune system), age $(y_t)$, and ART treatment duration $(d_t)$. In addition, we create an absorbing state for death, $D$. We divide the continuous CD4 cell counts into $L$ bins, $C = \{C_1, C_2, \cdots, C_L\}$. For age, we have $y_t \in [Y_0, Y_N]$, where $Y_0$ is the starting age and $Y_N$ is the terminal age of the patient. For treatment duration, $d_t = 0$ indicates that the patient has not yet started ART. Once the patient has started ART, $d_t$ increases by one unit after each time step. \

\item \textbf{\textit{Action}}: At each time $t$, the patient takes an action $a_t\in \{W,Rx\}$, where $W$ represents waiting for another period and $Rx$ means starting ART treatment immediately (and remaining on ART for life). \

\item \textbf{\textit{Transition Probability}}: The transition probability $P_k(S_{t}, a_t, S_{t+1})$ depends on the patient's current state $S_{t}$, the action $a_t$ at time $t$, and the state $S_{t+1}$. Two types of transitions can occur: transition between different CD4 count levels and transition to the terminal (death) state $D$.\

\item \textbf{\textit{Rewards}}: Two types of rewards are accrued: an immediate reward and a terminal reward. The immediate reward ($R^I$) is measured as the quality-adjusted life years (QALYs) the patient experiences when transitioning from $S_t$ to $S_{t+1}$ ($S_t \in C, S_{t+1} \in \{C,D\}$). We assume that if death occurs (that is, the patient transitions to state $D$ in period $t+1$) its timing is uniformly distributed from $t$ to $t+1$; in this case, we halve the immediate reward associated with state $S_t$. The terminal reward ($R^E$) is the cumulative remaining lifetime QALYs for a patient who passes the terminal age ($Y_N$).
\end{itemize}

We instantiated the model for the case of HIV-infected women in the United States, aged 20 to 90 years old. We grouped CD4 count levels into 7 bins: 0-50, 50-100, 100-200, 200-300, 300-400, 400-500, $>$500 cells/mm$^3$. Lower CD4 counts indicate greater disease progression, with CD4 counts at the lowest levels typically corresponding to full-blown AIDS. Each time period represents half a year. Every six months a patient can choose to start ART immediately or delay for another six months. To obtain the cumulative QALYs after the terminal age ($R^E$), we used a validated HIV natural history model \citep{zhong2020} and performed a cohort simulation that utilized the same model parameters including transition probabilities and utilities (quality-of-life multipliers). Values for all model parameters are provided in Appendix D. 

\subsubsection{Results}
 
We considered QMDP models with three different quantile thresholds, $\tau = 0.2$, $0.5$, and $0.8$. As $\tau$ increases, the patient becomes less risk-averse. Figure \ref{fig:HIV_optimal_policy} shows the optimal actions as a function of age and CD4 count.  Similar to the findings from the MDP model \citep{Diana2011}, we find that patients who are older or who have high CD4 counts tend to delay ART. In both cases, the reduction in HIV-associated morbidity and mortality from initiating ART is outweighed by the induced cardiac risks. For older patients, the induced cardiac risks are substantial because of the higher baseline cardiac risks at older ages. Patients with high CD4 counts are relatively healthy so the benefits from starting ART are less than the induced cardiac risks. 

In contrast to an MDP analysis, which maximizes expected cumulative reward and does not consider risk preferences, Figure \ref{fig:HIV_optimal_policy} shows that different risk attitudes of patients will lead to different treatment preferences. Patients who are less risk-averse (i.e., patients with higher levels of $\tau$) will tend to start ART sooner than patients who are more risk-averse. For example, a risk-averse 60-year-old woman with a CD4 count of 200 will choose to delay ART initiation, whereas a less risk-averse woman with the same CD4 count would choose to start ART. This is because patients with lower levels of risk aversion are more willing to accept elevated cardiac risks in order to gain the immunological benefits of ART. By incorporating the patient's risk attitude,  QMDP allows for a patient-centered care plan.

%We can use the QMDP model to illuminate the inherent risk associated with MDP. We simulated 50,000 different reward trajectories based on the optimal policy obtained from MDP for a 20-year old patient with CD4 level of 300-350 cells/mm$^3$. We then calculated the CDF for cumulative rewards from the simulated trajectories and best achievable reward for each quantile from the QMDP model. As shown in Figure \ref{fig:HIV_COMPARE}, there is a greater opportunity for improvement at higher quantiles, which suggests that one should consider using QMDP when patients are less risk-averse. 

We note that instabilities still exist in the computed QMDP policies for this example, especially within regions where an action switch is made. This phenomenon is similar to a non-monotonic policy achieved in an MDP or robust MDP where some of the sufficient conditions for a monotonic policy are violated \citep{Zhang2015RobustMD}. The instability may be caused by the structure of the rewards (immediate and terminal) and/or the transition probabilities of the underlying simulation model. Further research is needed to determine sufficient conditions for a monotonic optimal policy for QMDP. 

\begin{figure}[ht!]
\centering
\includegraphics[height=1.8in]{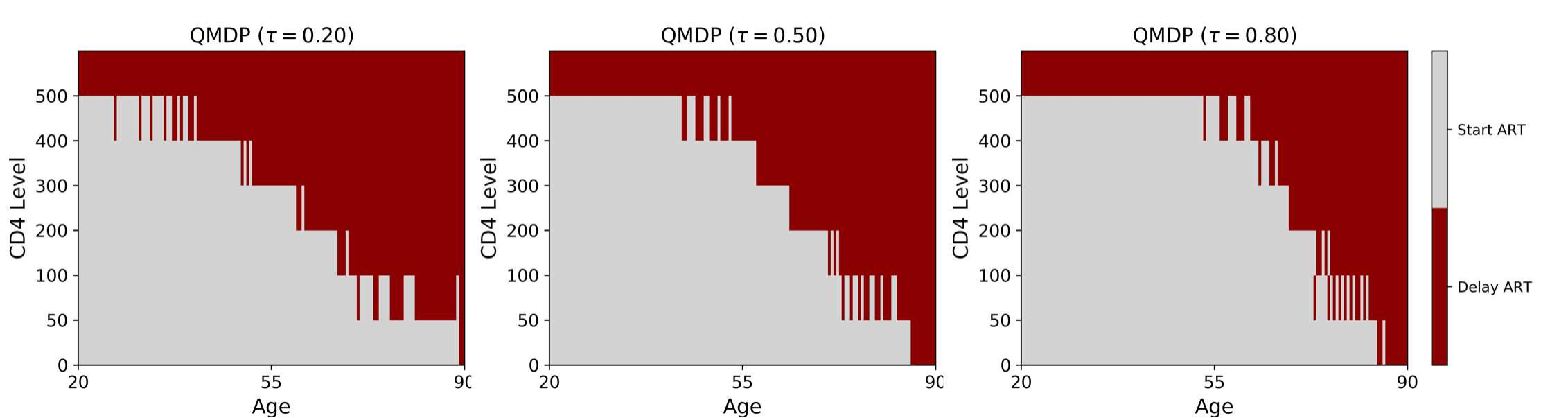}
\caption{HIV treatment example: Optimal actions for QMDP with $\tau = 0.20, 0.50$, and $0.80$.}
\centering
\label{fig:HIV_optimal_policy}
\end{figure}

\begin{figure}[t]
\centering
\includegraphics[height=1.35in]{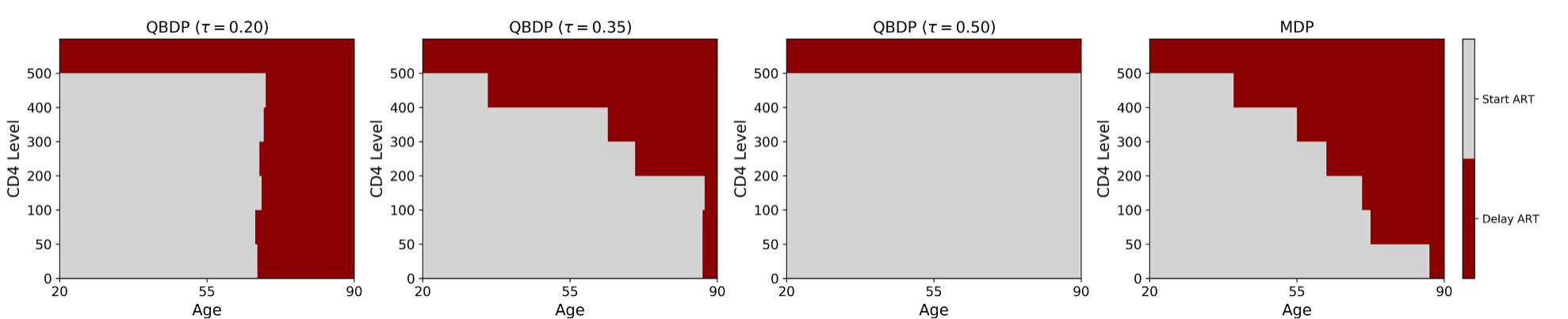}
\caption{HIV treatment example: Optimal actions for QBDP (for $\tau = 0.20,$ $0.35,$ and $0.50$) and MDP.}
\centering
\label{fig:HIV_QBDP_MDP_policy}
\end{figure}

Figure \ref{fig:HIV_QBDP_MDP_policy} shows the optimal actions for the HIV treatment example obtained using MDP, and QBDP with three values of $\tau$. The color of each point indicates the optimal action, to initiate ART or delay, for different CD4 levels and ages. The optimal MDP action roughly matches the optimal QMDP action with $\tau=0.5$ (see Figure \ref{fig:HIV_optimal_policy}). This follows the intuition that the median roughly equals the mean of a symmetric distribution. However, for QBDP, the optimal action roughly matches the optimal MDP action when $\tau = 0.35$. The risk parameters in risk-sensitive MDP models such as QBDP are less  interpretable than the quantile parameter in QMDP: although the parameter $\tau$ conveys a meaning of the quantile of instant reward, the QBDP optimal value function has nothing to do with the $\tau$-quantile or other distributional information of the true cumulative reward under the QBDP optimal policy.

%Consequently, other risk-sensitive models require a trade-off procedure of solving the model with multiple different parameters, sometimes together with running simulations, to find a proper parameter. For example,    

In Section \ref{illustration}, we noted that the optimal policy of QBDP or other nested-risk-measure models does not consider past history and thus cannot consider non-Markovian policies. Our experimental results reveal another potential shortcoming of the QBDP model. When computing the optimal value function and optimal policy, the QBDP model summarizes the cost-to-go cumulative reward into a deterministic value using backward dynamic programming. An improper choice of the risk parameter in QBDP may cause a loss of information for the future cost-to-go reward and thus cannot distinguish the outcomes under different actions. For example, different actions may result in the same median but different 20\% quantile values. In this case, if we choose $\tau=0.5$ for QBDP, the optimal policy is to always initiate treatment when the patient's CD4 level is less than 500, regardless of age. QBDP does not take into account the information from lower quantiles and cannot distinguish different actions, leading to overly myopic decisions. In contrast, QMDP summarizes the future cost-to-go value as a function of the quantile, and memorizes all the quantiles of the cost-to-go value, and thus is more informative.

%\begin{figure}[ht!]
%\centering
%\includegraphics[height=2.5 in]{HIV.png}
%\caption{HIV treatment example: Optimal QMDP reward and cumulative density function of MDP reward.}
%\centering
%\label{fig:HIV_COMPARE}
%\end{figure}

\section{Discussion}

We have presented a novel quantile framework for Markov decision processes in which the objective is to maximize the quantiles of the cumulative reward. We established several theoretical results regarding quantiles of random variables which contribute to an efficient algorithm for solving QMDP. The examples we presented show how solving QMDP with different values of $\tau$ generates solutions consistent with different levels of risk aversion. 

The QMDP model can be applied to a variety of problems in areas such as health care, finance, and service management where decision robustness and risk awareness play a key role. In this paper, we have restricted our attention to obtaining an exact solution for the QMDP problem. The complexity of our algorithm is $O\left(AST\cdot\max(RT,S)\right)$, where $T$ is the length of the time horizon, and $A=|\mathcal{A}|$ and $S = |\mathcal{S}|$ are the sizes of the action and state space, respectively. A promising area for future research is to determine how the QMDP model can be applied to very large scale real-world problems. In other risk-sensitive MDP settings, approximate dynamic programming (ADP) methods (e.g., \cite{jiang2017risk}) have been used to address the issue of exploding state space and inefficient sampling in large-scale problems. Further research could investigate how to incorporate ADP methods into the QMDP model. 

In addition to its value in determining the optimal decisions associated with different levels of risk aversion, QMDP provides a useful adjunct to MDP. Comparing the QMDP solution for different values of $\tau$ to the CDF of the MDP reward reveals the improvement that could be gained over the MDP solution if a quantile criterion were used. Depending on the decision maker's risk attitude, one might want to instead use QMDP or another risk-sensitive MDP model.

%QMDP can be used to assess the riskiness of the expected value solution that is obtained by solving an MDP.Thus, for example, if the QMDP solution for a low value of $\tau$ yields a significantly higher value than the corresponding cumulative density function from the MDP (as in the middle panel of Figure \ref{fig:QBDPReward}), then this is an indicator of the risk associated with the MDP solution. 

%Some open questions are: What is the cause of the risk of the MDP? As the grid game experiment in Section 6.1, the traditional MDP optimal policy also provides good quantile rewards for some trials, while the policy performs rather bad in certain quantiles in other cases. The reason that causes this discrepancy deserves more understanding. As for future work, we plan to apply the QMDP framework on more practical problems as a tool to understand the underlying risk and to generate quantile policies reflecting risk attitude.

\section*{Acknowledgement}
This work was partially supported by Grant Number R37-DA15612 from the National Institute on Drug Abuse. %We thank Kay Giesecke, Jose H. Blanchet, Diana Negoescu, Stefano Ermon, the two anonymous referees, and various seminar participants for helpful discussions and feedback.

\section*{Author Biography}
\textbf{Xiaocheng Li} received his Ph.D. in 2020 from the Department of Management Science and Engineering at Stanford University. He will join the Department of Analytics, Marketing and Operations at Imperial College Business School from 2021. His recent work develops theories and algorithms for online learning and sequential decision-making problems, using tools from stochastics, optimization, and statistics. \\
\textbf{Huaiyang Zhong} received his Ph.D. from the Department of Management Science and Engineering at Stanford University in 2020.  He is currently a postdoctoral fellow at Harvard Medical School and Massachusetts General Hospital. His research focuses on developing and applying theories and models to solve medical decision making problems at both individual and population levels. \\
\textbf{Margaret L. Brandeau} is the Coleman F. Fung Professor in the School of Engineering and professor (by courtesy) of medicine at Stanford University. Her recent work has examined HIV and drug abuse prevention and treatment programs, programs to control the opioid epidemic, and preparedness plans for public health emergencies. 
% Appendix here
% Options are (1) APPENDIX (with or without general title) or
%             (2) APPENDICES (if it has more than one unrelated sections)
% Outcomment the appropriate case if necessary
%
% \begin{APPENDIX}{<Title of the Appendix>}
% \end{APPENDIX}
%
%   or

\newpage
\bibliographystyle{ormsv080}
\bibliography{sample.bib}

\newpage

\end{document}